\newtheorem{theorem}{Theorem}
\newcommand{\cmmnt}[1]{}
\title{Probability Density Estimation Based Imitation Learning }
\author{
    Yang Liu$^1$ \and
    Yongzhe Chang$^1$ \and 
    Shilei Jiang$^1$\and
    Xueqian Wang$^1$\and
    Bin Liang$^{1,2}$\and
    Bo Yuan$^1$\footnote{Contact Author}
  \affiliations
  $^1$Shenzhen International Graduate School, Tsinghua University\\
  $^2$Department of Automation, Tsinghua University\\
  \emails
    liuyang-19@mails.tsinghua.edu.cn, 
    changyongzhe@sz.tsinghua.edu.cn,
    jsl19@mails.tsinghua.edu.cn,
    wang.xq@sz.tsinghua.edu.cn,
    liangbin@tsinghua.edu.cn, 
    yuanb@sz.tsinghua.edu.cn
}
\author{
First Author$^1$
\and
Second Author$^2$\and
Third Author$^{2,3}$\And
Fourth Author$^4$ \footnote{Contact Author}
\affiliations
$^1$First Affiliation\\
$^2$Second Affiliation\\
$^3$Third Affiliation\\
$^4$Fourth Affiliation
\emails
\{first, second\}@example.com,
third@other.example.com,
fourth@example.com
}
\begin{document}

\maketitle

\begin{abstract}
  Imitation Learning  (IL) is an effective learning paradigm exploiting the interactions between agents and environments. 
  It does not require explicit reward signals and instead tries to recover desired policies using expert demonstrations. 
  In general, IL methods can be categorized into Behavioral Cloning  (BC) and Inverse Reinforcement Learning  (IRL).
  In this work, a novel reward function based on probability density estimation is proposed for IRL, which can significantly reduce the complexity of existing IRL methods. 
  Furthermore, we prove that the theoretically optimal policy derived from our reward function is identical to the expert policy as long as it is deterministic. 
  Consequently, an IRL problem can be gracefully transformed into a probability density estimation problem. 
  Based on the proposed reward function, we present a “watch-try-learn” style framework named 
  Probability Density Estimation based Imitation Learning  (PDEIL), which can work in both discrete and continuous action spaces. 
  Finally, comprehensive experiments in the Gym environment show that PDEIL is much more efficient than existing algorithms in recovering rewards close to the ground truth.
\end{abstract}

\section{Introduction}

Recently, Reinforcement Learning  (RL) has made remarkable success in many tasks, such as Go  \cite{silver2017mastering},
Atari games  \cite{mnih2013playing} and robot control  \cite{abbeel2004apprenticeship,andrychowicz2020learning}. In an RL system, 
in addition to the agent and the environment, there are 
four key elements: policy, reward signal,  value function, and  model  \cite{sutton1998introduction}, among which
the reward signal plays a key role in directly determining the overall performance of the RL system.
More specifically, the reward signal defines the goal of the RL task, and guides the learning agent towards the desirable direction.
However, in some real-world problems, the reward signal is difficult to receive or to measure  explicitly.
In such cases, alternative approaches such as  Imitation Learning (IL) are required.

In IL, the learning agent receives demonstrations from an expert,
and the goal is to recover a desired policy using the expert demonstrations through interactions with the environment.
In general, IL methods can be categorized as Behavioral Cloning  (BC) \cite{pomerleau1991efficient,ross2010efficient,bagnell2007boosting} 
and Inverse Reinforcement Learning  (IRL)  \cite{russell1998learning,ng2000algorithms}. 
In BC, the agent tries to mimic the expert's action in each state, with the aim to match the expert as closely as possible.
By contrast, IRL methods  try to recover a reward function based on the demonstrations of the expert, 
and then use RL algorithms to train an agent based on the recovered reward function. In practice, BC methods are usually used to derive an initial policy for RL algorithms  
\cite{nagabandi2018neural,rajeswaran2017learning},
while IRL methods employ an iterative process alternating between reward recovery and RL.

However, BC methods have two major flaws. 
First, the prediction error may accumulate and agents in BC cannot recover the true action once mistakes occur.   
Second, BC methods are  supervised learning methods and require a large amount of data for training.
Furthermore, supervised learning methods assume that
the demonstrations are \textit{i.i.d}, which is unlikely to be the case in real-world tasks. 

Most previous IRL methods are model-based and need to call a Markov decision process (MDP) solver multiple times during learning. 
 Meanwhile, IRL methods are essentially cost function learning methods \cite{ng2000algorithms,abbeel2004apprenticeship,ziebart2008maximum}, aiming to minimize
the distance between the expert and the learning agent. For example,
Adversarial IL \cite{ho2016generative} uses  the KL divergence to estimate the cost function,  while Primal Wasserstein Imitation Learning  (PWIL)   \cite{dadashi2020primal} uses the upper bound of the Wasserstein distance to estimate the cost, 
and Random Expert Distillation  (RED)  \cite{wang2019random} uses the expert policy support estimation to estimate the cost.

In this paper, we focus on how to recover the reward function in IRL. The main contribution is 
that we propose a concise method to recover the reward function using expert demonstrations
based on probability density estimation. 
Instead of being restricted by the concept of cost, we propose to recover the reward function directly, and the form of the reward function is also
simpler and easier to calculate than previous IRL methods. We prove that, in theory, the expert policy can be recovered when our reward function is adopted.
We also propose a novel IRL algorithm named 
Probability Density Estimation based Imitation Learning  (PDEIL) to demonstrate the 
efficacy of the proposed reward function. 
Experimental studies in both discrete and continuous action spaces confirm that 
the desired policy can be recovered requiring fewer interactions with the environment and 
less computing resources compared with existing IL methods. 

\section{Background and Notations}

An environment in RL can be modeled as an MDP  \cite{sutton1998introduction}, 
with a tuple $ (S, A, T, \gamma, D, R)$, where $S$ is the state space; $A$ is
the action space; $T$ is the state transition probability model; 
$\gamma \in  (0, 1)$ is the discount factor; $D$ is the initial state distribution from which $s_0$ is drawn; 
$R: S \times A \rightarrow \mathbb{R}$ is the reward signal. A policy is 
the rule of the agent's behavior that can be generally denoted as $\pi$. 
The goal of RL is to obtain a policy that maximizes the cumulative discounted reward from the initial state, 
denoted by the performance objective $J (\pi)$, which can be written as an expectation \cite{silver2014deterministic}:
\begin{equation}
  \begin{aligned}
    J\left (\pi\right) &=\mathbb{E}_{s \sim \rho^{\pi}, a \sim \pi_{\theta}}[r (s, a)] \\
    &=\int_{S} \rho^{\pi} (s) \int_{A} \pi (a|s) r (s, a) \mathrm{d} a \mathrm{d} s 
  \end{aligned}
  \label{optimal_objective}
\end{equation}
where $\rho^{\pi} (s) = \sum_{t = 0}^{\infty}\gamma^{t}p (s_t=s|\pi)$ is the discounted state 
distribution of policy $\pi$. In the following, we  
introduce a new discounted distribution $\rho^{\pi} (s,a) = \rho^{\pi} (s) \pi (a|s)$, which is called discounted
state-action joint distribution of policy $\pi$. To differentiate between the two discounted distributions, 
we denote the discounted state distribution as $\rho^{\pi}_{s} (\cdot)$, and the discounted state-action joint
distribution as $\rho^{\pi}_{s,a} (\cdot)$.

Since there is no reward signal in the environment of IL, we use MDP$\backslash$R to denote an MDP without reward signal \cite{abbeel2004apprenticeship}, 
in the form of a tuple $ (S, A, T, \gamma, D)$.
Meanwhile, we use $\pi_{e}$ to represent the expert policy from which the demonstrations are generated.

\section{Related Works}

IRL was first defined by Ng \cite{ng2000algorithms}, which aimed to optimize the reward function given expert demonstrations and access to 
the environment. IRL itself features inherent ambiguity as the same optimal policy can be potentially derived with different reward functions.
Some IRL methods add an extra constraint to deal with the ambiguity issue, such as the maximum entropy  principle \cite{ziebart2008maximum}. 
Traditional IRL methods often employ a linear combination of features as the reward function and the task of recovering the reward function is to 
optimize a set of weights \cite{abbeel2004apprenticeship}.

Adversarial IL is an IRL method that has recently attracted significant interests, 
which uses Generative Adversarial Network (GAN) \cite{goodfellow2014generative} to simulate the IL process. 
In Adversarial IL, the generator in GAN represents a policy, while the discriminator in GAN corresponds to the reward function, 
and the recovering of the reward function and the learning of policy in IL are transformed into the training of the discriminator and generator. 

Expert support estimation is another direction in IRL. 
The key idea is to encourage the agent to stay within the state-action support of the expert 
and several reward functions have been proposed, such as Soft Q Imitation Learning (SQIL)  
\cite{reddy2019sqil}, RED, Disagreement-Regularized Imitation Learning (DRIL) \cite{brantley2019disagreement}, and PWIL.
SQIL features a binary reward function, according to which when the agent selects an action in a state endorsed by the expert, 
the agent receive a reward of $+1$.
Otherwise, the agent receive a reward of $0$. RED uses a neural network to estimate  the state-action support of the expert.
DRIL relies on the variance among an ensemble of BC models to estimate the state-action support, and constructs a reward function based on the distance to the 
support and the KL divergence among the BC models. 
PWIL employs a “pop-outs” trick to enforce agent to stay within the expert's support. 
In some sense, our method is similar to these methods as it employs a probability model to estimate the expert support as a component of the reward function.
% 还需要再一点点点“”

\section{Methodology}

The objective of our work is to design a reward function that can make the resulting optimal policy equal to the expert policy.
In this section, we first present the structure of our reward function, 
and then investigate why our reward function can make the optimal policy equal to the expert policy. 
Furthermore, we propose a revised version of the original reward function that can overcome its potential issue.
Finally, the PDEIL algorithm that employs the reward function is introduced.

\subsection{Reward Based on  Probability Density Estimation}

In traditional RL, 
the goal of the agent is to seek a policy that maximizes $J (\pi)$ in Equation \eqref{optimal_objective}. 
However, in an MDP$\backslash$R environment, 
the agent cannot receive reward signals from the interactions with the environment.
To accommodate this lack of reward signals, 
we need to design a reward function based on expert demonstrations.
If the reward function can guarantee in theory that the corresponding optimal policy is equal to the expert policy, 
we can expect to recover the expert policy using an RL algorithm.

\begin{theorem}\label{as1}
  Assume $\pi_e$ is a deterministic policy, then:
    \begin{equation}
      \forall \pi, \langle \pi_e (a | s), \pi_e (a | s) \rangle \geq  \langle \pi (a | s), \pi (a | s) \rangle,
      \label{ase1}
    \end{equation}
\end{theorem}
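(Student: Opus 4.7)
The plan is to read $\langle \cdot,\cdot\rangle$ as an inner product over the action space $A$ at a fixed state $s$, i.e.\ $\langle f,g\rangle=\sum_{a\in A}f(a)g(a)$ in the discrete setting (and the corresponding $L^{2}$ integral for continuous actions). With this interpretation, the inequality asserts that, among all probability distributions on $A$, the self-inner-product $\sum_{a}\pi(a\mid s)^{2}$ is maximized by distributions concentrated on a single atom, which is exactly what it means to be deterministic.

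First I would evaluate the right-hand side for the deterministic expert. Since $\pi_e(\cdot\mid s)$ places all of its mass on a single action $a^{*}(s)\in A$, we have $\pi_e(a\mid s)=\mathbf{1}[a=a^{*}(s)]$, and therefore $\langle \pi_e(a\mid s),\pi_e(a\mid s)\rangle=1$ for every $s$.

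Next I would bound $\langle \pi(a\mid s),\pi(a\mid s)\rangle$ for an arbitrary policy $\pi$ by a one-line pointwise estimate: each value $\pi(a\mid s)$ lies in $[0,1]$, so $\pi(a\mid s)^{2}\le \pi(a\mid s)$ coordinatewise in $a$. Summing over $a$ and using the normalization $\sum_{a}\pi(a\mid s)=1$ yields $\langle \pi(a\mid s),\pi(a\mid s)\rangle\le 1$, which matches the expert's value and gives \eqref{ase1}. Equality, incidentally, would force $\pi(a\mid s)\in\{0,1\}$ almost everywhere, i.e.\ $\pi$ itself deterministic.

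The hard part will be the continuous-action case. There, a strictly deterministic $\pi_e$ is a Dirac point mass and $\int \pi_e(a\mid s)^{2}\,da$ is not defined in the usual Lebesgue sense, so a literal reading of the theorem breaks down. I would address this either by smoothing $\pi_e$ with a narrowing kernel $\eta_{\varepsilon}$ and taking the $\varepsilon\downarrow 0$ limit (after which the pointwise-domination step still controls any comparison policy whose density is dominated by $\eta_{\varepsilon}$), or by restricting the formal statement to the discrete case while interpreting its continuous analogue in the kernelized/distributional sense that is actually used by the PDEIL estimator later in the paper; under either reading the discrete argument above transfers essentially unchanged.
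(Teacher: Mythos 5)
Your proof takes essentially the same route as the paper's: in the discrete case you use the identical pointwise bound $\pi(a\mid s)^2 \le \pi(a\mid s)$ together with normalization to get $\langle \pi,\pi\rangle \le 1 = \langle \pi_e,\pi_e\rangle$, and in the continuous case you flag the same non-integrability of the squared Dirac delta that the paper acknowledges, resolving it by a limiting/mollification reading much as the paper does informally with its sequences of uniform and triangular densities. Your observation about the equality case and the explicit $\varepsilon$-kernel regularization are slightly more careful than the paper's ``intuitively'' step, but the underlying argument is the same.
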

\noindent where $\langle  \cdot, \cdot \rangle$ is the inner product; $\langle \pi (a | s), \pi (a | s) \rangle = \int_a\pi^2 (a|s)da$ for continuous action spaces and 
$\langle \pi (a | s), \pi (a | s) \rangle =  \sum_a\pi^2 (a|s)$ for discrete action spaces. 
 Moreover, if a policy $\pi$ becomes more deterministic, the value of $\langle \pi (a | s), \pi (a | s) \rangle$ will also increase, and 
$\langle \pi (a | s), \pi (a | s) \rangle$ can be used to measure  the stochasticity of a policy 
(i.e., similar to the entropy of a policy \cite{haarnoja2018soft}).

\begin{proof}
  When the action space is discrete, and $\pi_e$ is a deterministic policy, 
  the expert agent only selects 
  an action with probability 1 for each state, then:
  $$\langle \pi_e, \pi_e \rangle = 1,$$
  for all policies:
  \begin{equation*}
      \langle \pi, \pi \rangle = \sum_{a \in A}\pi^{2} (a|s),
  \end{equation*}
  for all actions:  
  $$0 \leq \pi (a | s) \leq 1,$$
  then:
  $$\langle \pi, \pi \rangle \leq \sum_{a \in A}\pi (a|s) = 1,$$
  we have:
  $$\forall \pi, \langle \pi_e, \pi_e \rangle \geq \langle \pi, \pi \rangle$$
  % 这里这段话需要一定的修改也需要添加引用
  When the action space is continuous, the probability density function of a deterministic policy is a shifted Dirac function  ($\delta (a - a_0)$) \cite{lillicrap2015continuous}. Note that $\int_a\pi_e^2 (a|s)da$ is not 
  integrable when $\pi_e (a | s) = \delta (a - a_0)$, and $ \langle \pi_e, \pi_e \rangle$ goes to infinity, so intuitively, $\langle \pi_e, \pi_e \rangle \geq \langle \pi, \pi \rangle$ .  % 这句话需要填写一些新的 
\end{proof}

Table \ref{t1} is an example of Theorem \ref{as1}  when the action space is discrete. In table {\ref{t1}}, $\langle \pi_1 (a | s), \pi_1 (a | s) \rangle = 1, \langle \pi_2 (a | s), \pi_2 (a | s) \rangle = \frac{1}{3}$. Since $\pi_1$ is a 
deterministic policy  (it only selects action $a_3$), and $\pi_2$ is a uniform random policy,
therefore $\langle \pi_1 (a | s), \pi_1 (a | s) \rangle \geq \langle \pi_2 (a | s), \pi_2 (a | s) \rangle$.

  \begin{table}[tbh]
    \centering
    \caption{\small{An example of two different policies on the same state with a discrete action space.}}
    \begin{tabular}{llll}
      \toprule
               & $a_1$ & $a_2$ & $a_3$ \\ \midrule
    $\pi_1 (a|s)$ & $0$    & $0$  & $1$  \\
    $\pi_2 (a|s)$ & $\frac{1}{3}$  & $\frac{1}{3}$  & $\frac{1}{3}$  \\ \bottomrule
  \end{tabular}
  \label{t1}  
\end{table}

Figure \ref{kk}  gives an example of  Theorem \ref{as1}  when the action space is continuous. It shows two alternatives to approach the shifted  Dirac function: 
uniform distributions as Figure \ref{un}, and triangle distributions as Figure \ref{tri}. 
When the height of the rectangle or triangle goes to infinity, the probability density function approaches $\delta (a - a_0)$.
Meanwhile, the higher the rectangle or triangle, the more deterministic the policy is. If we calculate the inner product of the policies in Figure\ref{kk}, 
it is clear that more deterministic policies have greater inner product values.
\begin{figure}[tbh]
  \centering
  \subfigure[Uniform policies]{
    \resizebox{0.2\textwidth}{!}{\includesvg{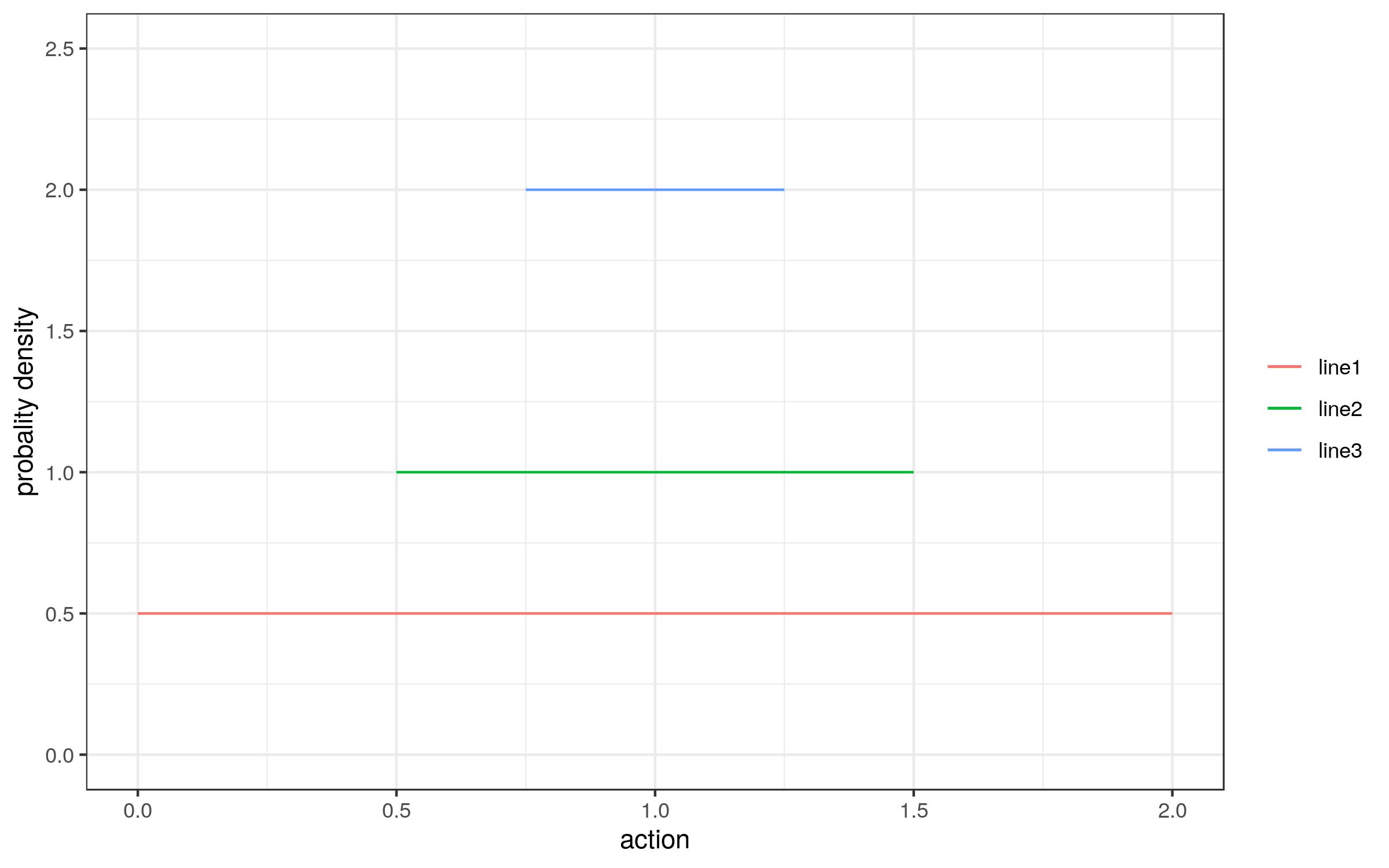}}
    \label{un}
  }
  \subfigure[Triangle policies]{
    \resizebox{0.2\textwidth}{!}{\includesvg{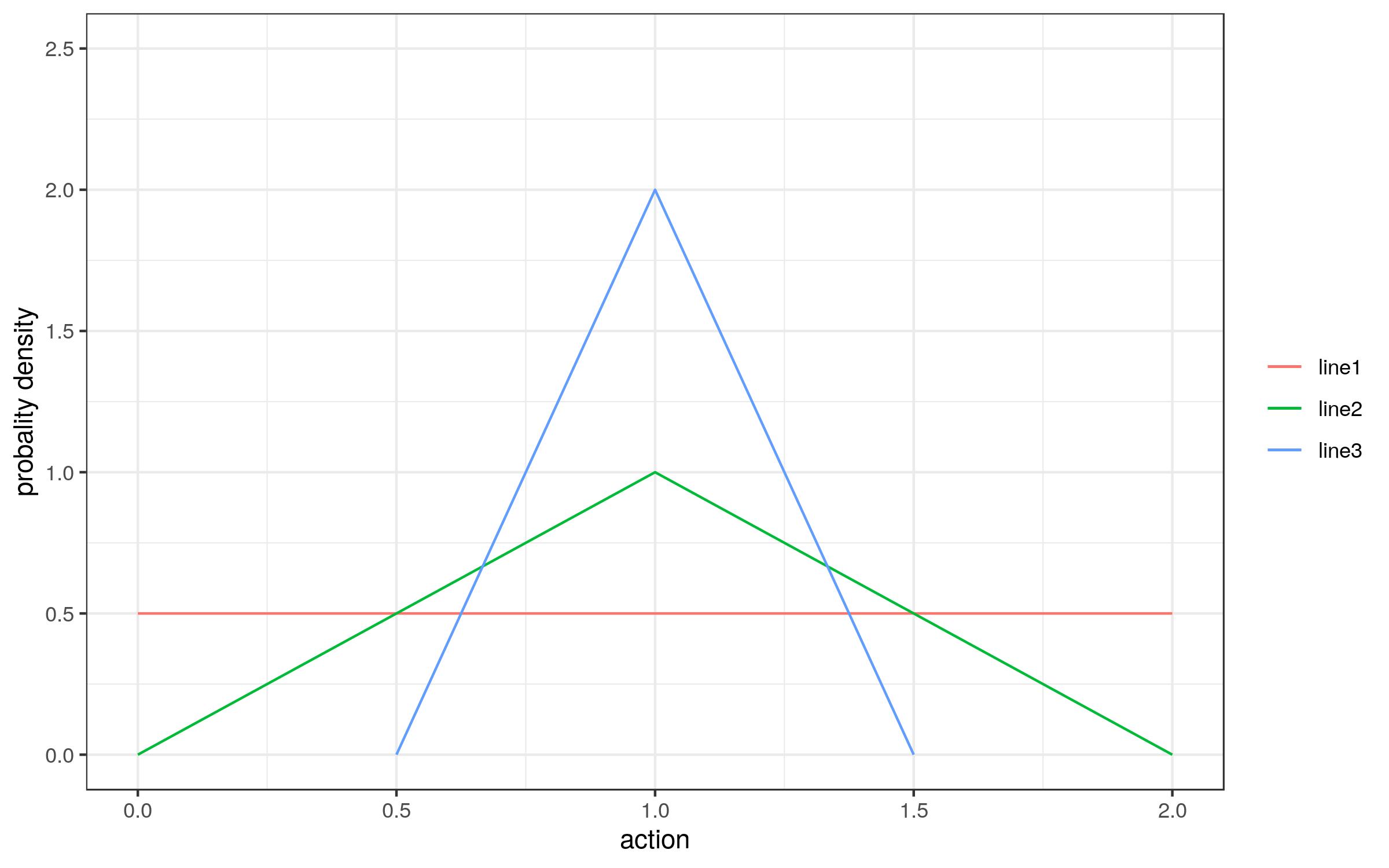}}
    \label{tri}
  }
  \caption{An example of different policies on the same state with a continuous action space. In uniform policies, $\langle \pi_1, \pi_1 \rangle = 0.5$, $\langle \pi_2, \pi_2 \rangle = 1$, $\langle \pi_3, \pi_3 \rangle = 2$, 
  $\langle \pi_1, \pi_1 \rangle < \langle \pi_2, \pi_2 \rangle < \langle \pi_3, \pi_3 \rangle$; In triangle policies $\langle \pi_1, \pi_1 \rangle = 0.5$, 
  $\langle \pi_4, \pi_4 \rangle = \frac{2}{3}, \langle \pi_5, \pi_5 \rangle = \frac{4}{3}$, 
  $\langle \pi_1, \pi_1 \rangle < \langle \pi_4, \pi_4 \rangle < \langle \pi_5, \pi_5 \rangle$.}
  \label{kk}
\end{figure}

\begin{theorem}\label{pro}
    Assume:
    \begin{equation}
      \begin{aligned}
        r (s, a) &= \frac{\rho_{s}^{\pi_e} (s)}{\rho^{\pi}_{s} (s)}\pi_{e} (a | s) \\
        &= \frac{\rho_{s, a}^{\pi_e} (s,a)}{\rho_{s}^{\pi} (s)}
      \end{aligned},
      \label{origin_reward}  
    \end{equation}
    when the expert policy is a deterministic policy (i.e., Equation \eqref{ase1} is satisfied), the optimal policy $\pi_{*}$ is identical to the expert policy 
    under the optimal objective of Equation \eqref{optimal_objective}.
\end{theorem}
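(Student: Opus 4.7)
The plan is to plug the proposed reward into the performance objective \eqref{optimal_objective} and exploit the cancellation between $\rho^{\pi}_{s}(s)$ in the occupancy measure and the $\rho^{\pi}_{s}(s)$ sitting in the denominator of $r(s,a)$. Concretely, I would substitute the second form of Equation \eqref{origin_reward} into $J(\pi)$ and observe that
\begin{equation*}
J(\pi) \;=\; \int_{S} \rho^{\pi}_{s}(s)\int_{A} \pi(a|s)\,\frac{\rho^{\pi_e}_{s,a}(s,a)}{\rho^{\pi}_{s}(s)}\,\mathrm{d}a\,\mathrm{d}s \;=\; \int_{S}\rho^{\pi_e}_{s}(s)\,\langle \pi(a|s),\pi_e(a|s)\rangle\,\mathrm{d}s,
\end{equation*}
so that the outer state-occupancy is detached from $\pi$ and the whole dependence on $\pi$ is pushed into a per-state inner product against $\pi_e$. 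This is the crucial algebraic step, and it is where the particular form of the reward is used.

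Once $J(\pi)$ is in this form, the task reduces to showing that for every policy $\pi$ and every state $s$, $\langle \pi(a|s),\pi_e(a|s)\rangle \leq \langle \pi_e(a|s),\pi_e(a|s)\rangle$, with equality when $\pi=\pi_e$. In the discrete case I would combine the Cauchy--Schwarz inequality $\langle \pi,\pi_e\rangle \leq \sqrt{\langle \pi,\pi\rangle\,\langle \pi_e,\pi_e\rangle}$ with Theorem \ref{as1}, which gives $\langle \pi,\pi\rangle \leq \langle \pi_e,\pi_e\rangle$; chaining these two bounds yields $\langle \pi,\pi_e\rangle \leq \langle \pi_e,\pi_e\rangle$, and integrating against the non-negative weight $\rho^{\pi_e}_{s}(s)$ gives $J(\pi)\leq J(\pi_e)$. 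Equality is achieved when $\pi=\pi_e$, so $\pi_e$ is an optimizer.

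The genuine obstacle is the continuous case, where $\pi_e(a|s)=\delta(a-a_0(s))$ and $\langle \pi_e,\pi_e\rangle$ is not finite, so the clean Cauchy--Schwarz bound above is only formal. The plan here is to argue directly: for any policy $\pi$ with a proper density, $\langle \pi,\pi_e\rangle = \pi(a_0(s)\mid s)$ is finite at each $s$, hence $J(\pi)$ is finite, whereas $J(\pi_e)$ diverges (or, more carefully, dominates any finite value), so $\pi_e$ still maximizes the objective. Alternatively, I would approximate $\pi_e$ by a sequence of narrowing densities (e.g.\ the uniform or triangular policies of Figure \ref{kk}), apply the discrete-style argument to each approximant where both inner products are finite, and pass to the limit; the monotonicity of $\langle \pi_n,\pi_n\rangle$ as the approximants concentrate is exactly what Theorem \ref{as1} was set up to record.

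Finally I would note uniqueness up to the caveat implicit in the reward's definition: because the reward only constrains $\pi$ through the inner product $\langle \pi(a|s),\pi_e(a|s)\rangle$ at states $s$ with $\rho^{\pi_e}_{s}(s)>0$, the optimal policy is guaranteed to agree with $\pi_e$ on the expert-visited states, which is the sense in which $\pi_{*}=\pi_e$. I would flag this restriction explicitly, since it is the subtlest point of the statement and foreshadows the need for the revised reward function the authors mention next.
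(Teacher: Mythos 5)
Your proposal follows essentially the same route as the paper's own proof: substitute the reward into $J(\pi)$ so that $\rho^{\pi}_{s}(s)$ cancels, reduce the objective to $\int_{S}\rho^{\pi_e}_{s}(s)\langle \pi(a|s),\pi_e(a|s)\rangle\,\mathrm{d}s$, and then chain Cauchy--Schwarz with Theorem \ref{as1} to conclude $J(\pi)\leq J(\pi_e)$. If anything, you are more careful than the paper, which silently applies the same Cauchy--Schwarz bound in the continuous case where $\langle \pi_e,\pi_e\rangle$ diverges and does not flag that optimality only pins down $\pi_{*}$ on states with $\rho^{\pi_e}_{s}(s)>0$; both of your added remarks are correct and worth keeping.
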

  
  \begin{proof}
    Since Equation \eqref{origin_reward}, then:
    \begin{equation*}
      \begin{aligned}
        J (\pi) &= \int_{S} \rho^{\pi}_{s} (s) \int_{A} \pi (a|s) r (s, a) dads \\
               &= \int_{S}\rho^{\pi}_{s} (s) \int_{A} \pi (a|s)\frac{\rho_{s}^{\pi_e} (s)}{\rho^{\pi}_{s} (s)}\pi_{e} (a | s)dads \\
               &= \int_{S}\rho^{\pi_e}_{s} (s) \int_{A} \pi (a|s)\pi_{e} (a | s)dads 
      \end{aligned}
    \end{equation*}
    \begin{equation*}
      \begin{aligned}
        J (\pi_e) &= \int_{S} \rho^{\pi_e}_{s} (s) \int_{A} \pi_e (a|s) r (s, a) dads \\
                 &= \int_{S}\rho^{\pi_e}_{s} (s) \int_{A} \pi_e (a|s)\frac{\rho_{s}^{\pi_e} (s)}{\rho^{\pi_e}_{s} (s)}\pi_{e} (a | s)dads \\
                 &= \int_{S}\rho^{\pi_e}_{s} (s) \int_{A} \pi_e (a|s)\pi_{e} (a | s)dads \\ 
      \end{aligned}
    \end{equation*}
    Given the fact of Cauchy Inequality:
    \begin{equation*}
       (\int f (x) g (x) dx)^2 \leq \int f^2 (x)dx \int g^2 (x)dx
    \end{equation*}
    then:
    \begin{equation*}
       (\int_{A} \pi (a|s)\pi_{e} (a | s))^2da \leq \int_{A}\pi^2 (a|s)da \int_{A} \pi_e^2 (a|s)da  
    \end{equation*} 
    $\exists$ Equation \eqref{ase1}, then:
    \begin{equation*}
      \begin{aligned}
         (\int_{A} \pi (a|s)\pi_{e} (a | s))^2da &\leq \int_{A}\pi^2 (a|s)da \int_{A} \pi_e^2 (a|s)da \\
                                             &\leq  (\int_{A} \pi_e^2 (a|s)da)^2
      \end{aligned}
    \end{equation*}
    then:
    \begin{equation*}
      J (\pi) \leq J (\pi_e), \pi_* = \pi_{e}
    \end{equation*}
  \end{proof}

According to Theorem \ref{pro}, we can construct the reward function 
as Equation  \eqref{origin_reward}, 
and once the optimal policy based on this reward function is found, 
we can recover the expert policy. 
Since the distribution of population is not known \textit{a priori}, $\rho^{\pi_e}_{s,a} (s,a)$ and $\rho^{\pi}_{s} (s)$ cannot be computed directly. 
The most common and intuitive solution is to estimate
the two probability densities from corresponding samples. Consequently, the practical reward function can be written as:
  \begin{equation}
    r (s, a) = \frac{\widehat{\rho^{\pi_e}_{s, a}} (s,a)}{\widehat{\rho^{\pi}_{s}} (s)},
    \label{mis_lead}
  \end{equation}
where $\widehat{\rho^{\pi_e}_{s, a}} (s,a)$  can be estimated using the demonstrations of the expert,
and $\widehat{\rho^{\pi}_{s}} (s)$ can be estimated through the agent's interactions with the environment. 
However, the reward function in the original form of Equation \eqref{mis_lead} has a major defect, which we will discuss in the following section.

  \subsection{Misleading Reward}\label{mr}
  
  \begin{figure}[tb]
    \centering
    \subfigure[Change in ratio]{
      \resizebox{0.45\textwidth}{!}{\includesvg{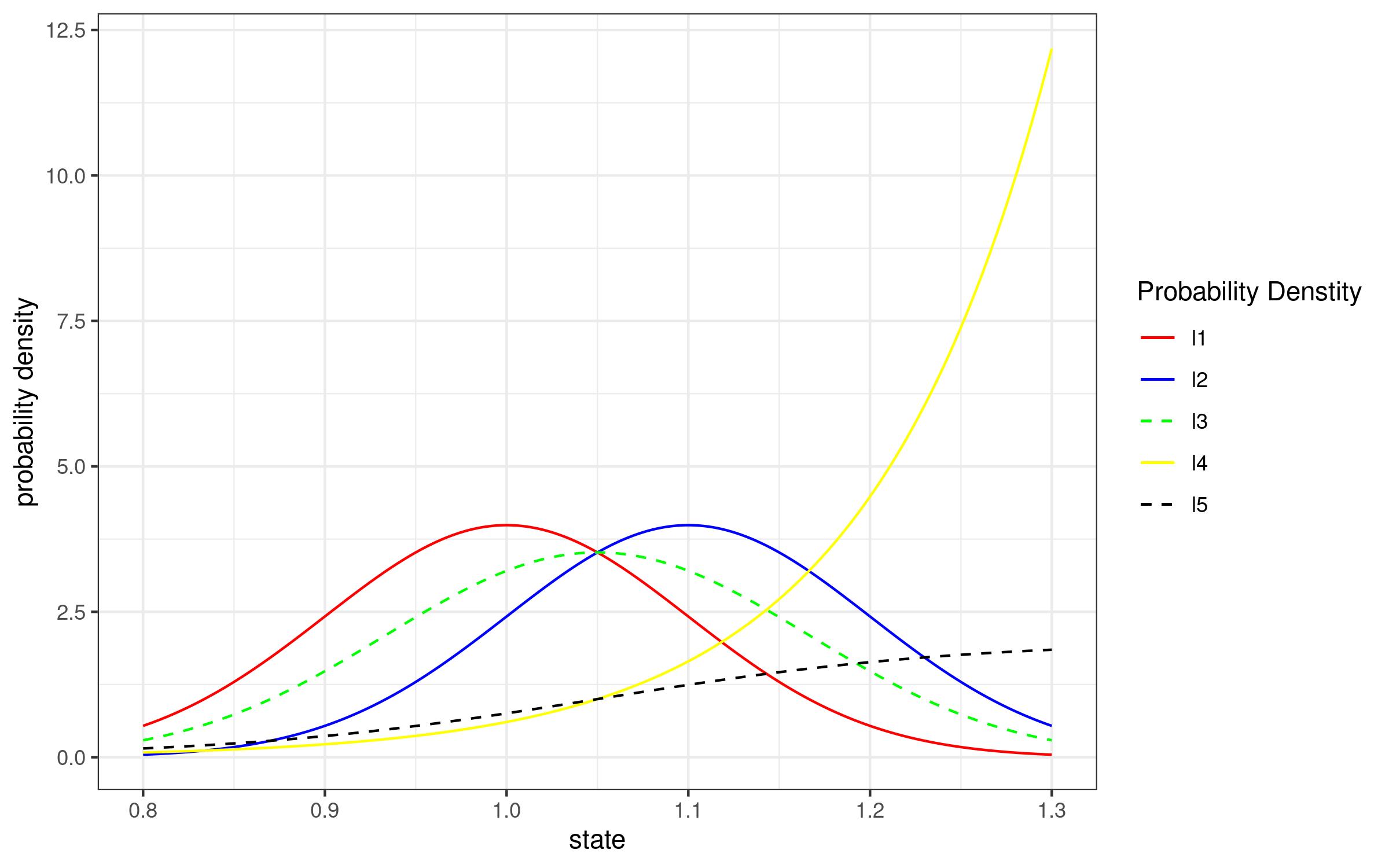}}
    }
    \subfigure[Reward variance]{
      \resizebox{0.45\textwidth}{!}{\includesvg{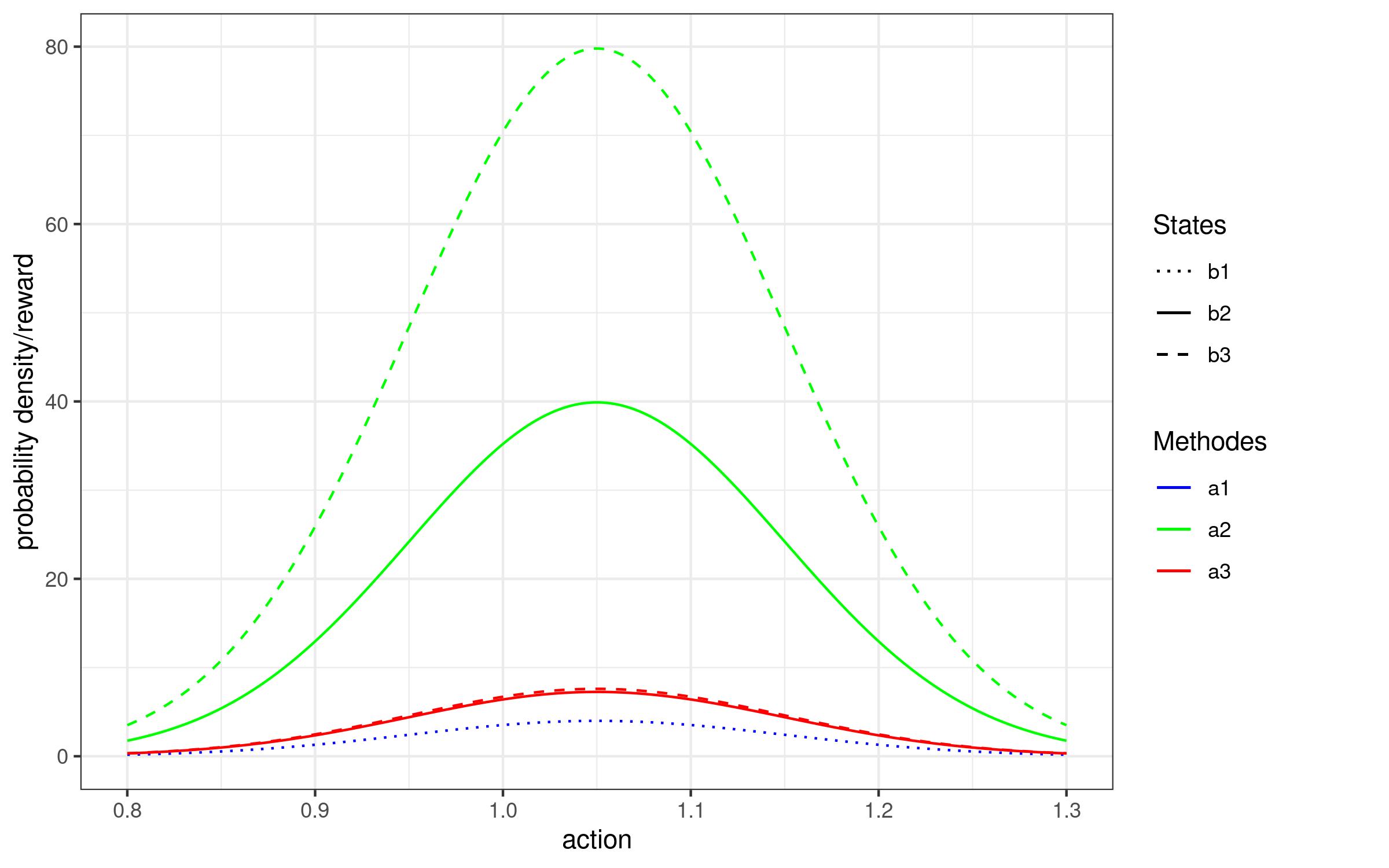}}
      \label{haha}
    }
    \caption{Comparison between two reward functions. 
    The original ratio $ \frac{\rho_{s}^{\pi_e} (s)}{\rho^{\pi}_{s} (s)}$ may be extremely high when $\rho^{\pi}_{s} (s)$ is close to 0 
    , while the revised ratio $\frac{2\rho_{s}^{\pi_e} (s)}{\rho_{s}^{\pi_e} (s) + \rho^{\pi}_{s} (s)}$ is close the original ratio when the original ratio is low, 
    and has an upper bound of 2.
    The reward in the form of Equation \eqref{mis_lead} has a higher variance than the revised reward in the form of Equation \eqref{adjust_error}.}
    \label{vbt}
  \end{figure}

  The reward function in the form of Equation \eqref{mis_lead} has a potential issue, referred to as the misleading reward problem.
  It means that the agent relying on this reward function may get extremely wrong rewards in some cases. 
  This problem happens when the agent reaches a state that it never encountered before.
  When the agent explores such states, where the values of $\widehat{\rho^{\pi}_{s}} (s)$ are close to zero, 
  the estimated rewards in these states are likely to have a large 
  variance among different actions. Consequently, the agent will receive very high reward signals in these states compared with other ordinary states. 
  These wrong rewards can inevitably mislead the agent, and make the RL algorithm fail to reach its intended target. % 插入一张 misleading reward 的图%
  
  The cause of this misleading reward problem is the estimation error of $\rho^{\pi}_{s}$. 
  When we use certain states as the samples to estimate the probability density, 
  the probability density of other states may be estimated to be close to 0. 
  Although this problem can be partially alleviated by increasing the number of states used to estimate $\rho^{\pi}_{s}$, 
  more interactions will also be required, which is not sample-efficient.

  To solve this problem, we make a trade-off between bias and variance in estimating the reward:

  \begin{equation}
    r (s, a) = \frac{\widehat{\rho^{\pi_e}_{s,a}} (s,a)}{\alpha \widehat{\rho^{\pi_e}_{s}} (s)+ \beta \widehat{\rho^{\pi}_{s}} (s)},
    \label{origin_adjust_error}
  \end{equation}
  where $\alpha + \beta = 1$ and $0 \leq \alpha \leq 1$. The coefficient $\alpha$ plays the role of a variance controller: when $\alpha$ is close to $0$, 
  the estimator has high variance and low bias; when $\alpha$ is close to 1, the estimator has high bias and low variance. 
  
  Intuitively, $\alpha = 0.5$ indicates a reasonable balance between bias and variance, and a revised reward function is:
  \begin{equation}
    r (s, a) = \frac{2 \widehat{\rho^{\pi_e}_{s,a}} (s,a)}{\widehat{\rho^{\pi_e}_{s}} (s) +  \widehat{\rho^{\pi}_{s}} (s)}
    \label{adjust_error}
  \end{equation}
  
  Figure \ref{vbt} gives an illustration of the comparison between the reward functions in the form of Equation \eqref{mis_lead} and Equation \eqref{adjust_error}.
  In Figure \ref{haha}, we select two states where $\widehat{\pi_e(a | s_1)} = \widehat{\pi_e(a | s_1)} = \widehat{\pi_e(a | s)}$ (the blue dot line), $\widehat{\rho^{\pi_e}(s_1)} = \widehat{\rho^{\pi_e}(s_2)} = 1$,
  $\widehat{\rho^{\pi}(s_1)} = 0.1, \widehat{\rho^{\pi}(s_2)} = 0.05$.
  
\subsection{Algorithm}

\begin{algorithm}[tb]
  \caption{PDEIL:Probability Density Estimation based Imitation Learning }
  \label{algo} 
    \begin{tabular}{ll}
      \textbf{Input:} 
      & Expert demonstrations $D = \{s_i^e, a_i^e\}_{i \in [1:D]}$ \\ 
      & An agent's policy model $\pi$ \\
      & Probability Density Estimator $\widehat{\rho_{s,a}^{\pi_e}}$ \\
      & Probability Density Estimator $\widehat{\rho_{s}^{\pi_e}}$ \\
      & Probability Density Estimator $\widehat{\rho_{s}^{\pi}}$ \\
      &  A state’s buffer $R$ \\
    \textbf{Parameter}: &Number of epochs $N$ \\
    &Number of trying steps $T$ \\
    &Number of learning steps $L$ \\
    &Trade-off parameter $\alpha$ \\
    \textbf{Output}: & An agent with a desired policy \\
   \end{tabular}
   \
  \begin{algorithmic}[1] 
  %[1] enables line numbers
  \STATE Watching: train $\widehat{\rho_{s,a}^{\pi_e}}$ using the expert demonstrations $D$; train $\widehat{\rho_{s}^{\pi_e}}$ by using $\{s_i^e\}_{i \in [1:D]}$\\
  \FOR{\texttt{$i$ in $1:N$}}
          \STATE Trying: agent with model $\pi$ interacts with the environment for $T$ steps and save $s_1, s_2 \dots s_T$ into $R$
          \STATE Train  $\widehat{\rho_{s}^{\pi}}$ using $R$ and update $\widehat{\rho_{s}^{\pi}}$
          \STATE Clear $R$
          \STATE Update reward function using Equation \eqref{origin_adjust_error}
          \STATE Learning: agent performs learning for $L$ steps using an RL algorithm and updates its model $\pi$
  \ENDFOR
  \end{algorithmic}
  \end{algorithm}

  We present a novel IL algorithm based on the reward function in Equation \eqref{origin_adjust_error} and the 
  pseudo code of the algorithm is shown in Algorithm \ref{algo}.
  
  The framework of PDEIL consists of three major components, watching, trying and learning. In the watching part, 
  the agent  watches expert demonstrations, and uses these demonstrations to train 
  $\widehat{\rho_{s,a}^{\pi_e}}$ and $\widehat{\rho_{s}^{\pi_e}}$. In the trying part,
  the agent interacts with the environment for some steps to make better understanding of the environment as well as to 
  train and update $\widehat{\rho_{s}^{\pi}}$. In the learning part, the agent uses an RL algorithm to improve itself by further approximating the expert. 
  In a complete training process, our agent watches once and iteratively performs the trying and learning operations 
  several times as necessary, similar to the framework in \cite{zhou2019watch}.

  Probability density estimation is a classical problem that has been extensively investigated in statistics. 
  For a discrete random variable, the most common way to estimate its probability is using a frequency table representing
  its distribution. For a continuous random variable, there are two different ways to estimate its probability density: the parametric density estimation 
  and the nonparametric density estimation. 
  Parametric density estimation methods model the overall distribution as a certain distribution family, 
  such as Gaussian distributions, and the key is to determine the parameters of the distribution.  
  Nonparametric density estimation methods are model-free and
  one of the most commonly used methods is the kernel density estimation. % 这里需要介绍一些kde是怎么做的。
  For a mixed random variable, such as$ (s, a)$ where $s$ is continuous and $a$ is discrete, given the fact of $p (s,a) = p (a|s)p (s)$, 
  this joint distribution estimation problem can be transformed into a continuous distribution 
  estimation problem and a conditional probability estimation problem. For the conditional probability estimation, 
  we can regard it as a classification task, and, 
  for instance, use an SVM  \cite{noble2006support} classifier or other classifier methods to estimate $p (a|s)$.

  \section{Experiments}

  In the experimental studies, we aim to answer the following questions:
  
  \begin{enumerate}
    \item How does PDEIL perform in different environments?
    \item Is PDEIL more efficient than other IL algorithms?
    \item Is the recovered reward by PDEIL close to the ground truth reward?
    \item Does the misleading reward problem really exist?
  \end{enumerate}
  
  Our experiments were conducted in two environments in Gym, which is an open source platform for studying reinforcement learning algorithms. 
  We chose two classical control environments,
  CartPole and Pendulum (see Figure \ref{environments}), where CartPole is a discrete action space environment and Pendulum is a continuous action space 
  environment. To evaluate PDEIL, we hid the original reward signals of these two environments during training.  
  For CartPole, two Gaussian models were used to estimate $\rho^{\pi_e}_{s}$ and $\rho^{\pi}_{s}$ and an SVM model was used to estimate $\pi_e (a | s)$.
  PPO  \cite{schulman2017proximal} was used
  to update the policy in the learning steps. 
  For Pendulum, we used three Gaussian models to estimate $\rho^{\pi_e}_{s, a}$, $\rho^{\pi_e}_{s}$ and $\rho^{\pi}_{s}$, while 
  SAC  \cite{haarnoja2018soft}  was 
  used to update the policy in the the learning steps. Furthermore, 
  we set $N = 100$, $T = 1000$ in both environments with $L = 10$ and $5000$ on CartPole and
  Pendulum, respectively, according to some preliminary trials. % Our code is provided in \url{https://github.com/yliu15thu/deil}.%a github repo%

  \begin{figure}[tbh]
    \centering
    \subfigure[\small{CartPole}]{
      \includegraphics[width=.225\textwidth]{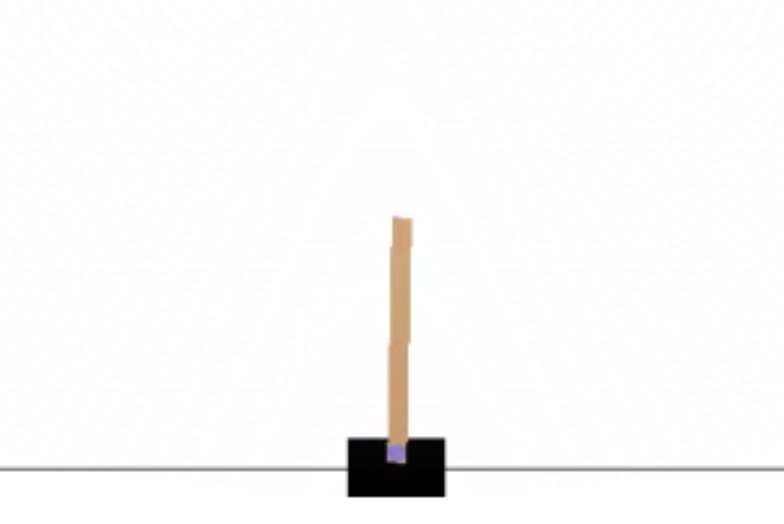}
    }
    \subfigure[\small{Pendulum}]{
      \includegraphics[width=.225\textwidth]{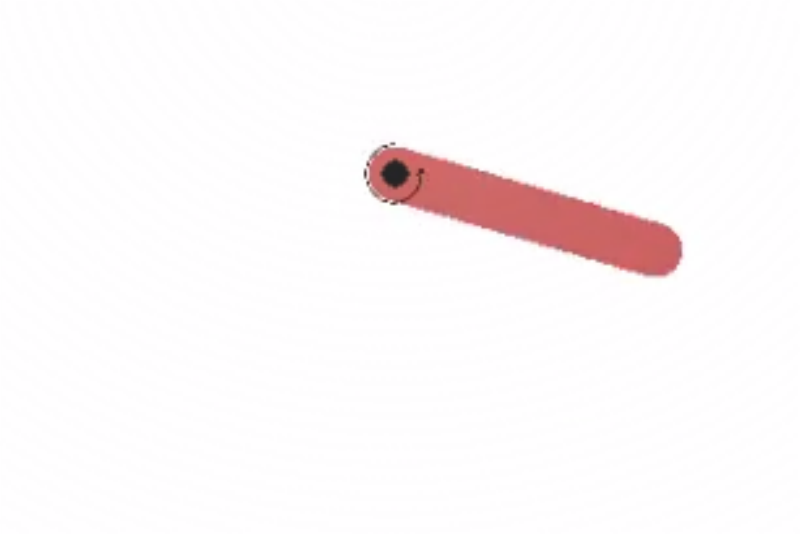}
    }
    \caption{The two experimental environments.}
    \label{environments}
  \end{figure}

  To answer Question 1, 
  we applied PDEIL with various expert demonstrations with $\alpha = 0.5$. 
  The performance of PDEIL in the two environments is shown in Figure \ref{af}. 
  The results clearly indicate that PDEIL can recover desired policies 
  that are reasonably close to the expert policies with a small amount of expert demonstrations in both discrete and continuous action spaces. 
  We also find that PDEIL may occasionally experience some slight stability issue. For example, the performance of PDEIL with 
  5 episodes of expert demonstrations on Pendulum (the blue line in Figure \ref{pe}) was a bit fluctuated. We argued for that there are two possible reasons: 
  \begin{enumerate*}[label=\roman*)]
    \item the estimated reward function is biased when the agent is learning;
    \item the optimization process for the neural network has some inherent instability.
  \end{enumerate*}
  %这里还可以多说一点点%

  \begin{figure}[tbh]
    \centering
    \subfigure[\small{CartPole-v1}]{
      \resizebox{0.225\textwidth}{!}{\includesvg{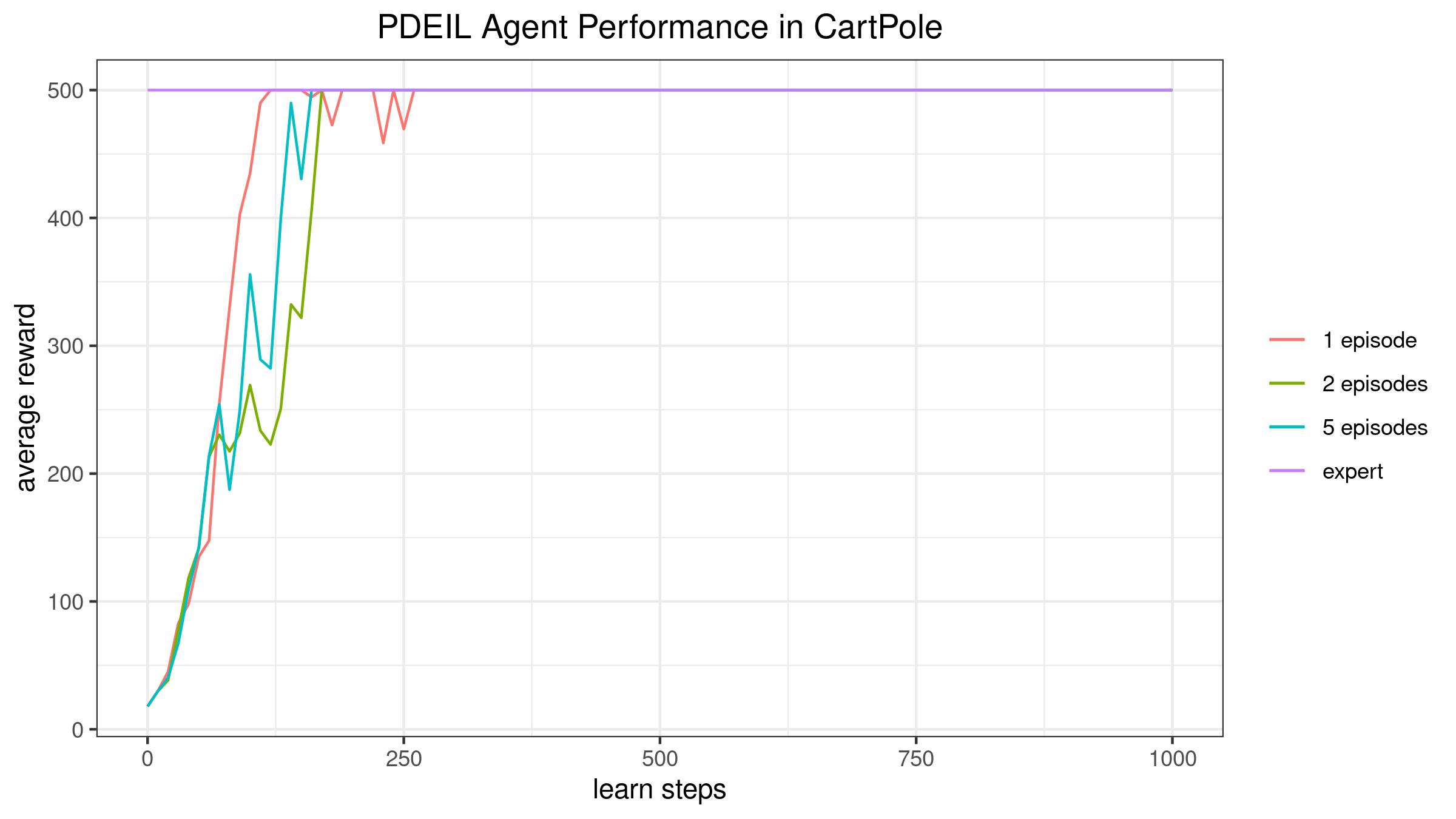}}
    }
    \subfigure[\small{Pendulum-v0}]{
      \resizebox{0.225\textwidth}{!}{\includesvg{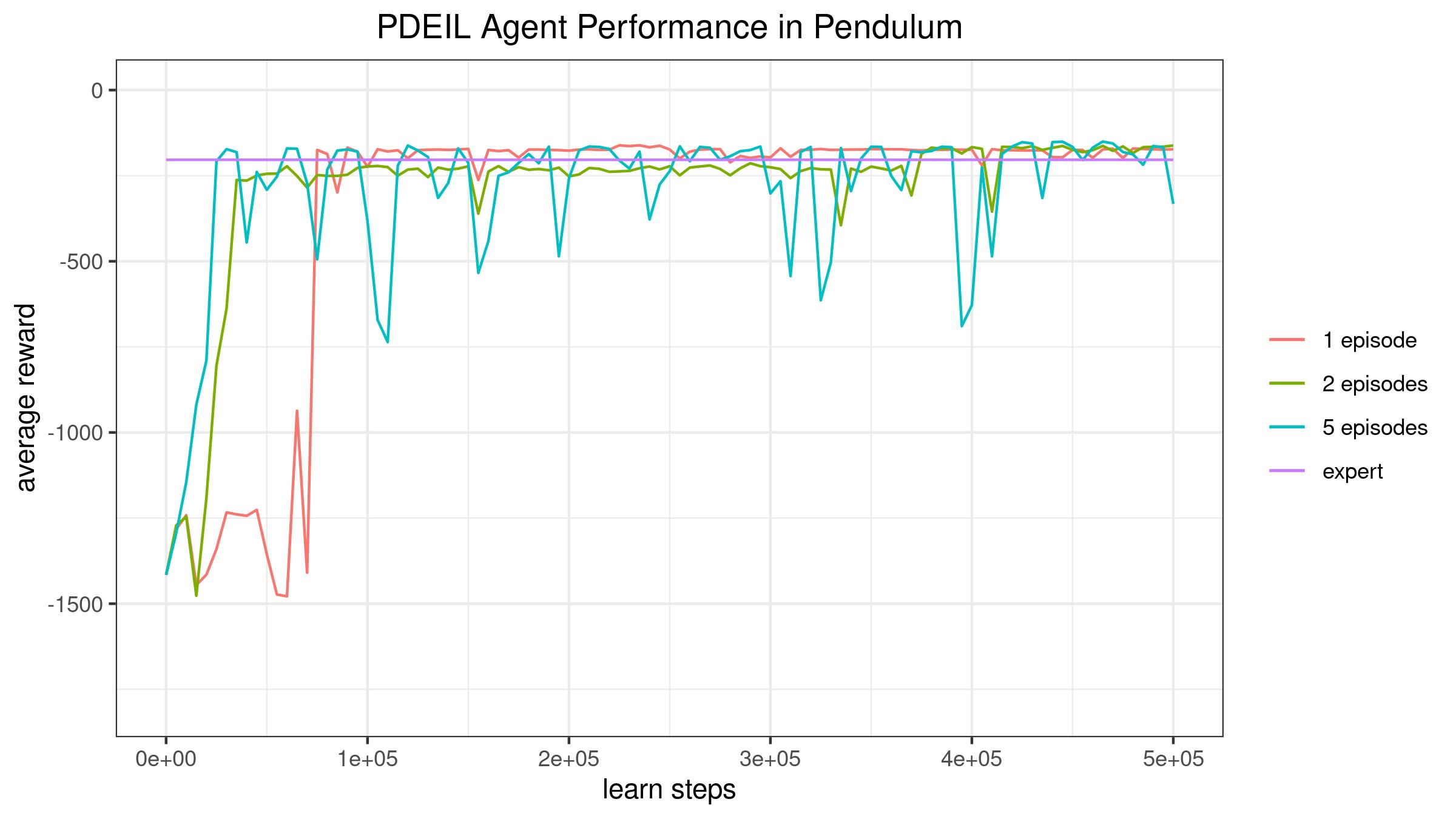}}
      \label{pe}
    }
    \caption{The performance of PDEIL with different episodes.}
    \label{af}
  \end{figure}

  To answer Question 2, we conducted extensive comparison among PDEIL, GAIL and BC algorithms with the same expert demonstrations. 
  The trade-off parameter $\alpha$ was also fixed to $0.5$ while the number of episodes of expert demonstrations was varied among 1, 2 and 5.
  In Figure \ref{epo}, it is obvious that PDEIL is much more efficient than GAIL and BC. 
  Although it seems that GAIL and PDEIL have a similar efficiency on CartPole, 
  PDEIL is more stable and GAIL requires many more interactions with the environment in each learning step. 
  Furthermore, PDEIL uses the Gaussian model  for reward estimation while GAIL uses a neural network, 
  which is more complicated and expensive in the training steps.

  \begin{figure*}[tbh]
    \centering
    \subfigure[\scriptsize{Performance on CartPole using 1 episode expert demonstrations}]{
      \resizebox{0.32\textwidth}{!}{\includesvg{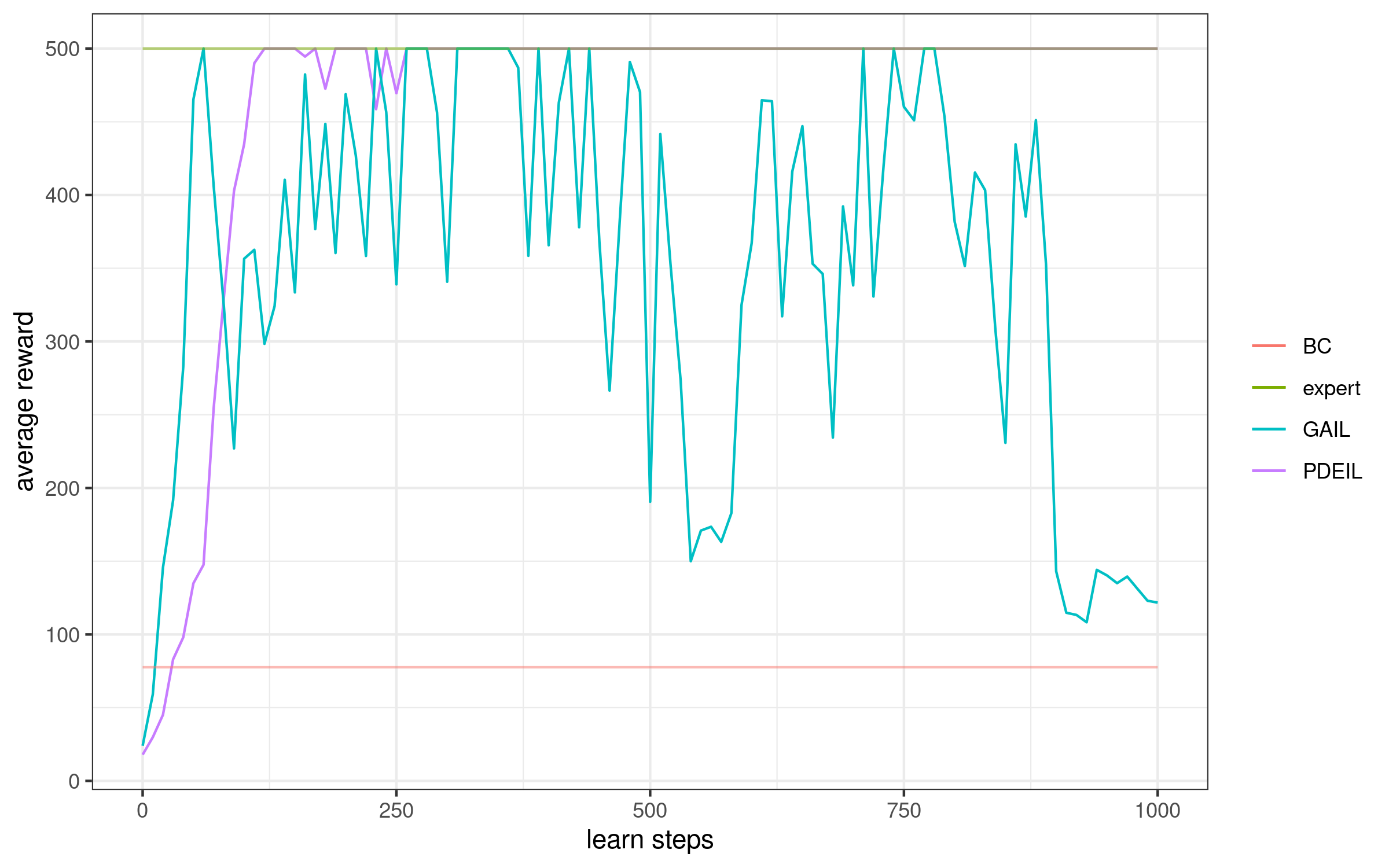}}
    }
    \subfigure[\scriptsize{Performance on CartPole using 2 episodes expert demonstrations}]{
      \resizebox{0.32\textwidth}{!}{\includesvg{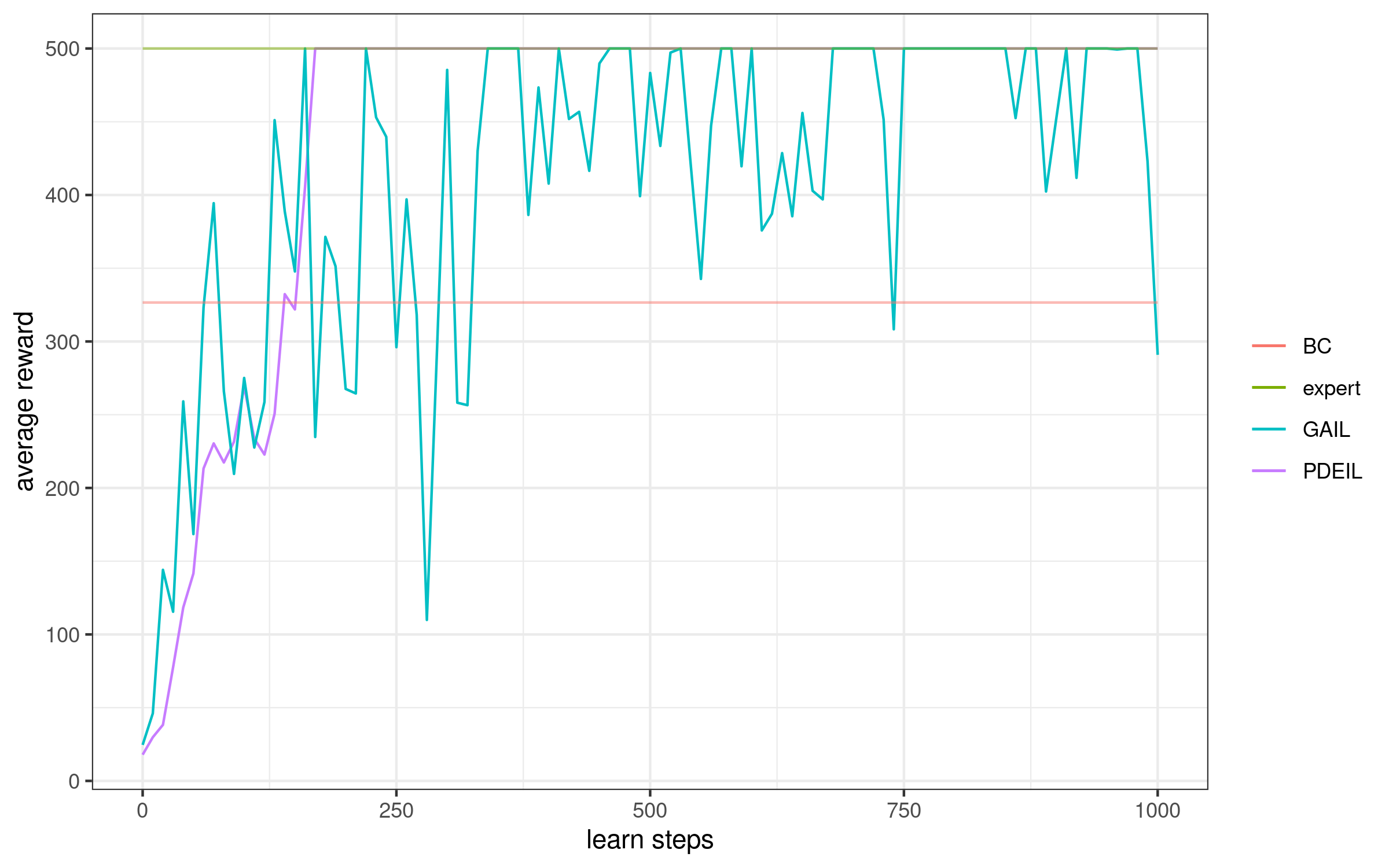}}
    }
    \subfigure[\scriptsize{Performance on CartPole using 5 episodes expert demonstrations}]{
      \resizebox{0.32\textwidth}{!}{\includesvg{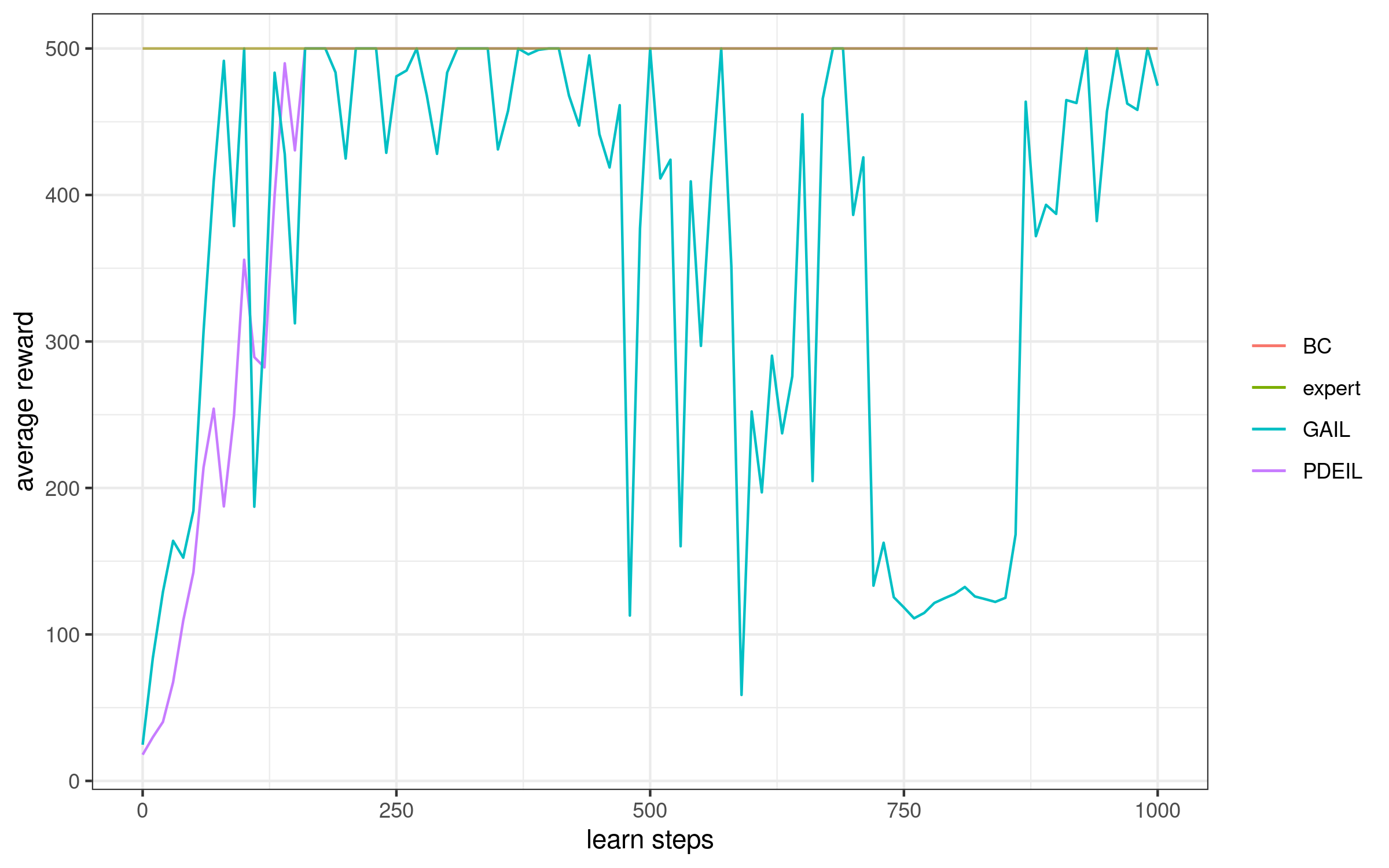}}
    }
    
    \subfigure[\scriptsize{Performance on Pendulum using 1 episode expert demonstrations}]{
      \resizebox{0.32\textwidth}{!}{\includesvg{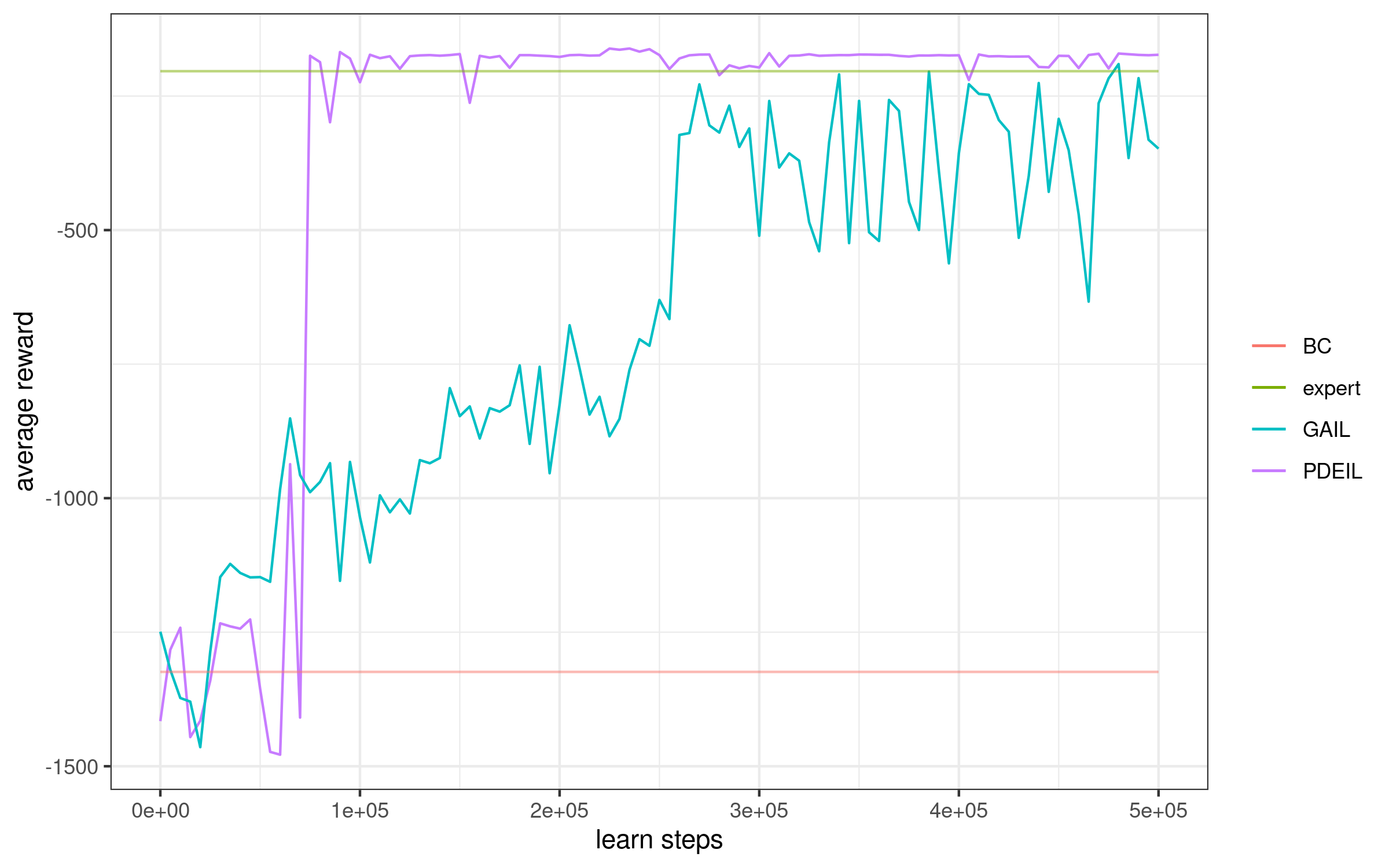}}
    }
    \subfigure[\scriptsize{Performance on Pendulum using 2 episodes expert demonstrations}]{
      \resizebox{0.32\textwidth}{!}{\includesvg{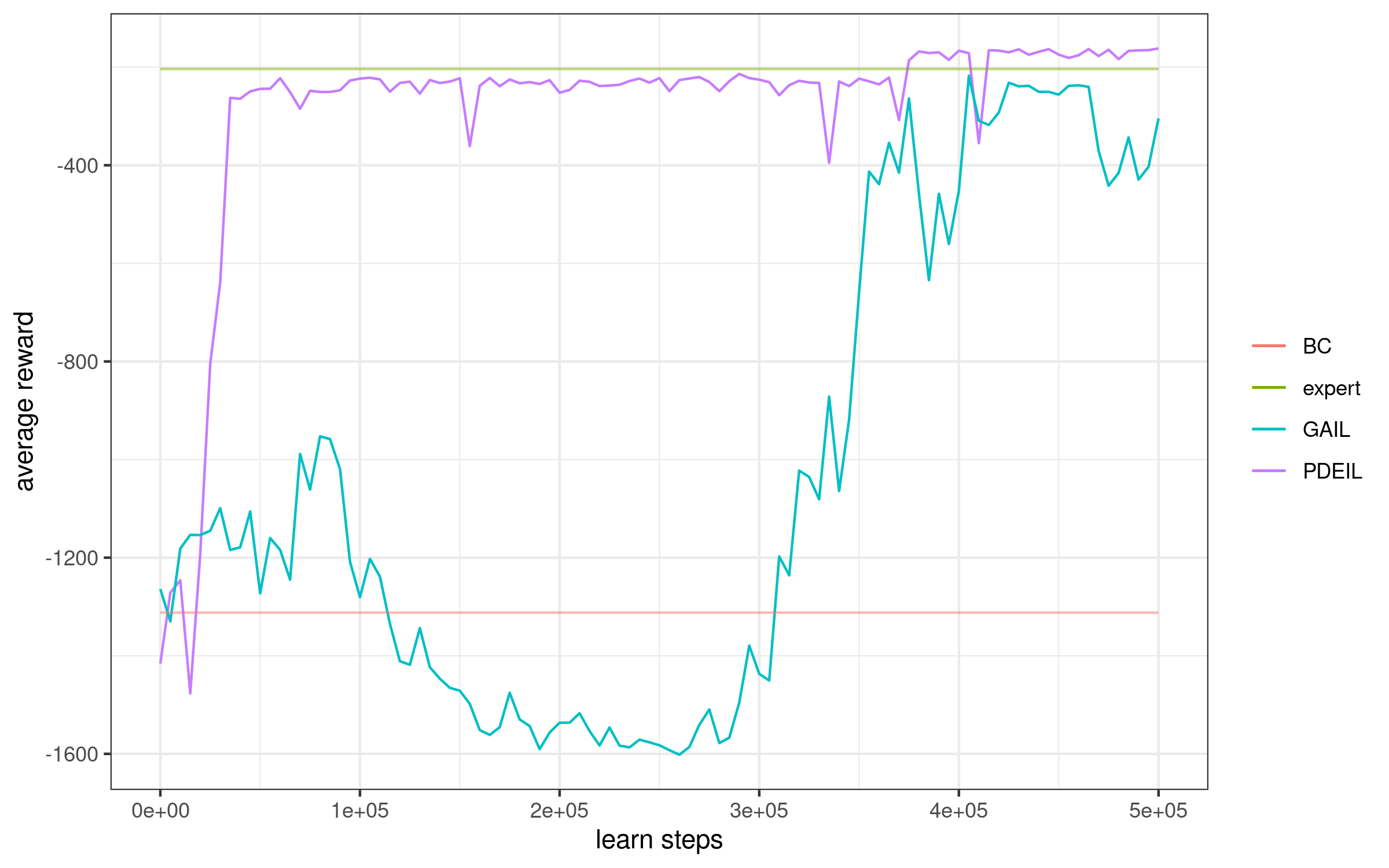}}
    }
    \subfigure[\scriptsize{Performance on Pendulum using 5 episodes expert demonstrations}]{
      \resizebox{0.32\textwidth}{!}{\includesvg{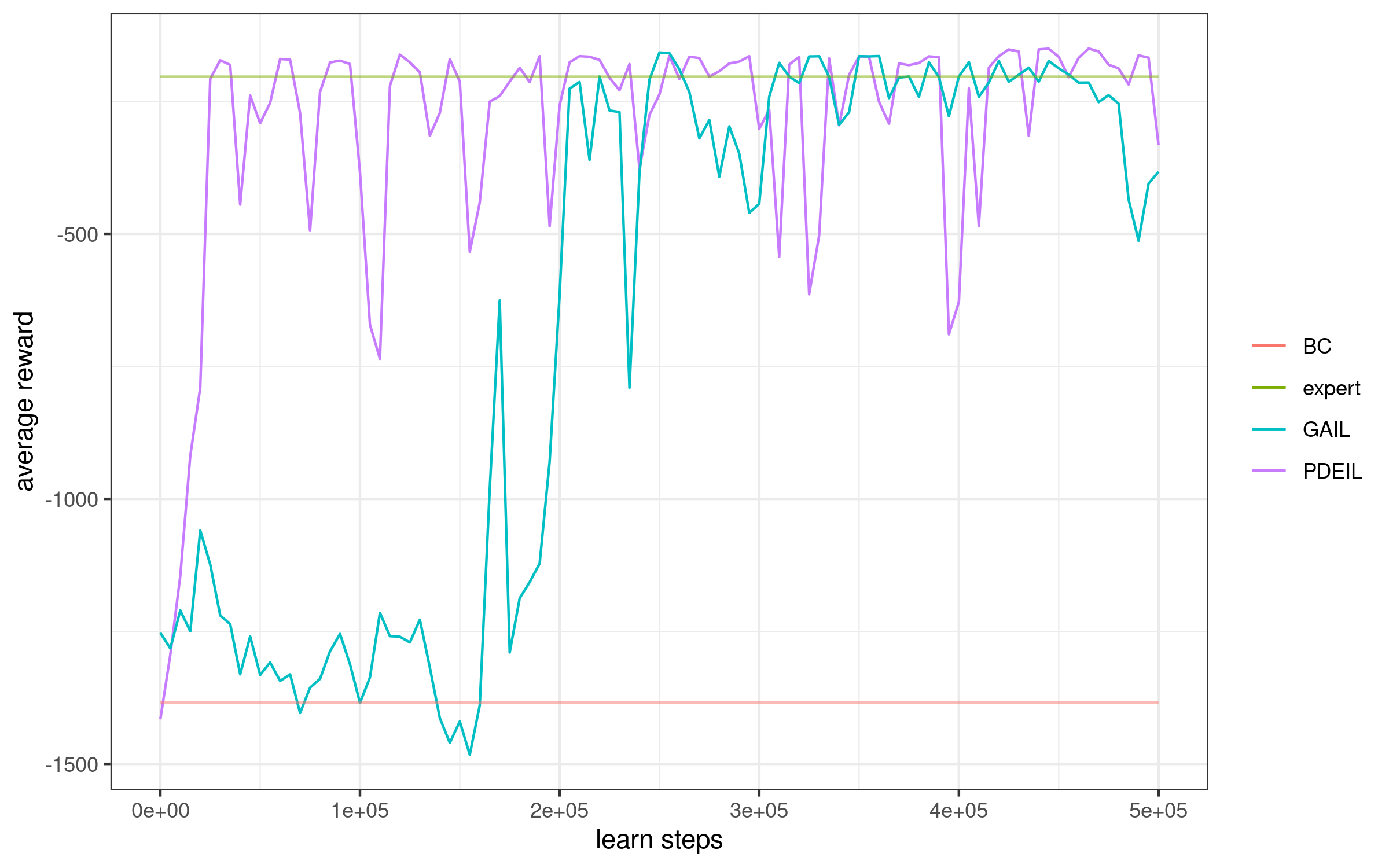}}
    }
    \caption{PDEIL vs GAIL and BC on CartPole and Pendulum environments.}
    \label{epo}
  \end{figure*}

To answer Question 3, we collected the recovered rewards and the ground truth rewards from the original environment in the trying steps in PDEIL. 
We chose the Pendulum environment as the example for the sake of visual demonstration, as its ground truth reward is continuous.
Each round of the experiment had 100 epochs from which the 13th to 16th epochs were  picked as examples for  illustration in Figure \ref{epo2}.
It is clear that the correlation between the recovered reward and the ground truth reward gets increasingly stronger,
which means that our proposed reward function can be expected to guide the RL algorithm to learn a competent policy.
\begin{figure}[tb]
  \centering
  \subfigure[\scriptsize{Rewards in the 13th epoch}]{
    \resizebox{0.2\textwidth}{!}{\includesvg{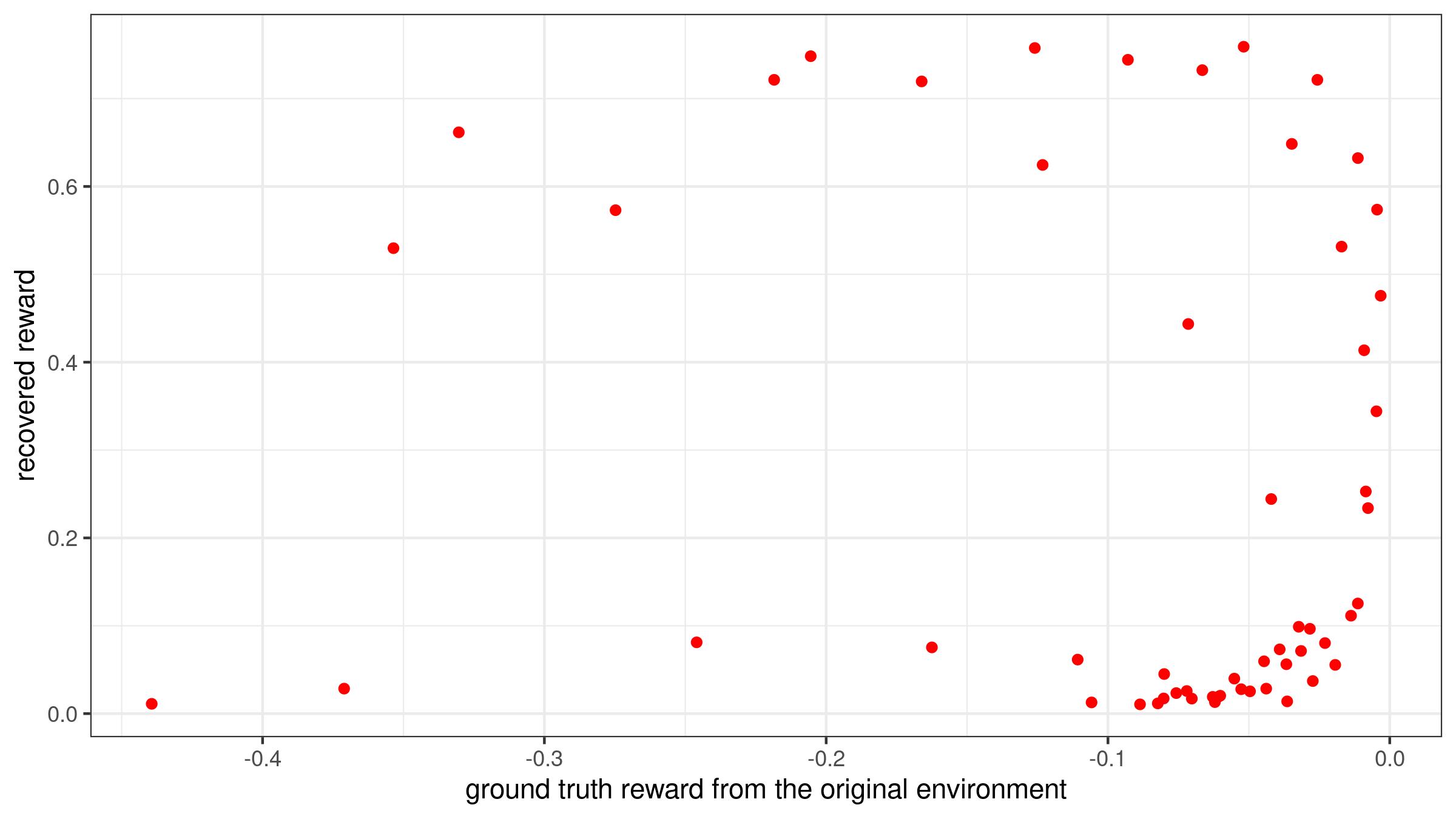}}
  }
  \subfigure[\scriptsize{Rewards in the 14th epoch}]{
    \resizebox{0.2\textwidth}{!}{\includesvg{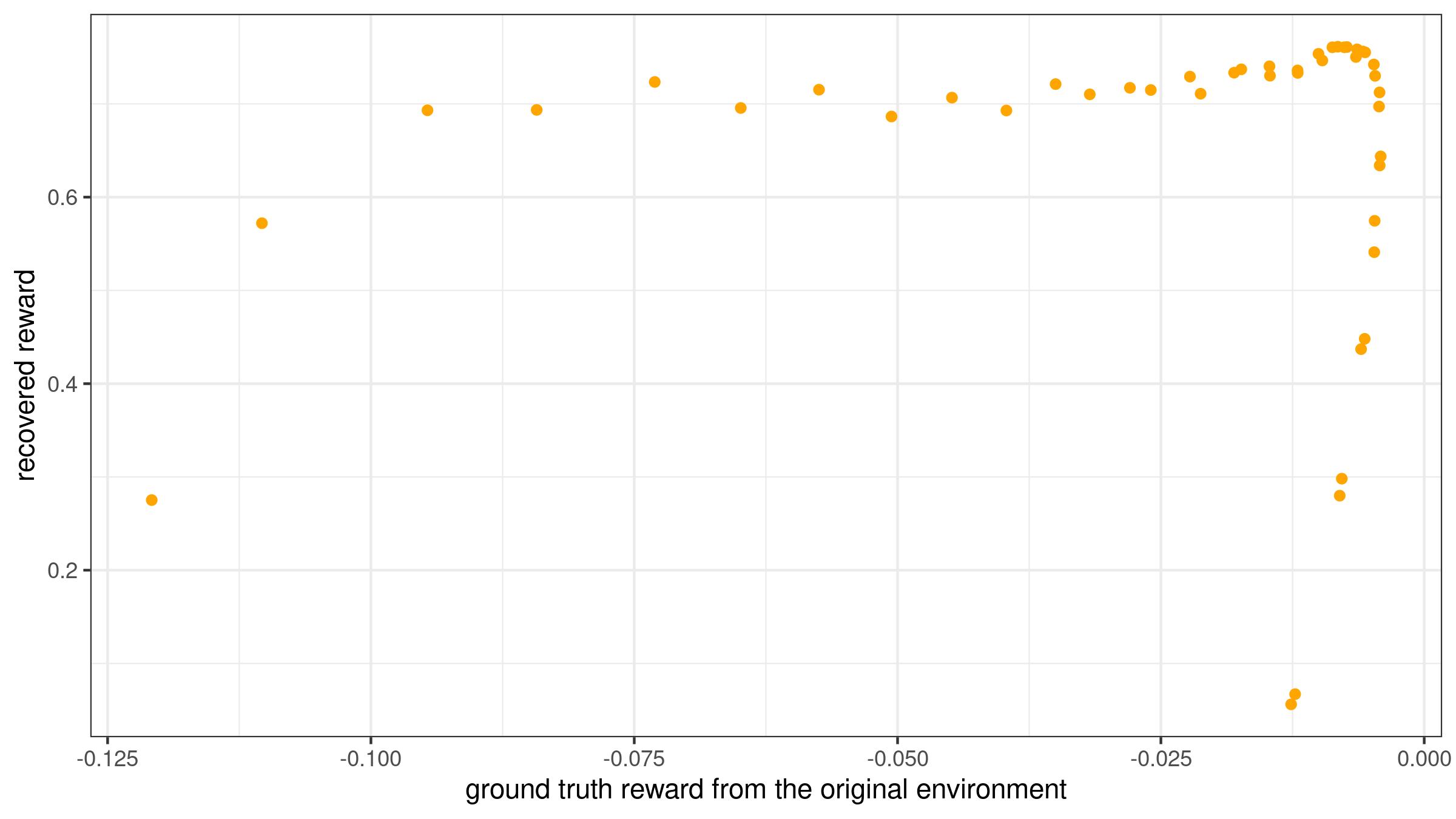}}
  }

  \subfigure[\scriptsize{Rewards in the 15th epoch}]{
    \resizebox{0.2\textwidth}{!}{\includesvg{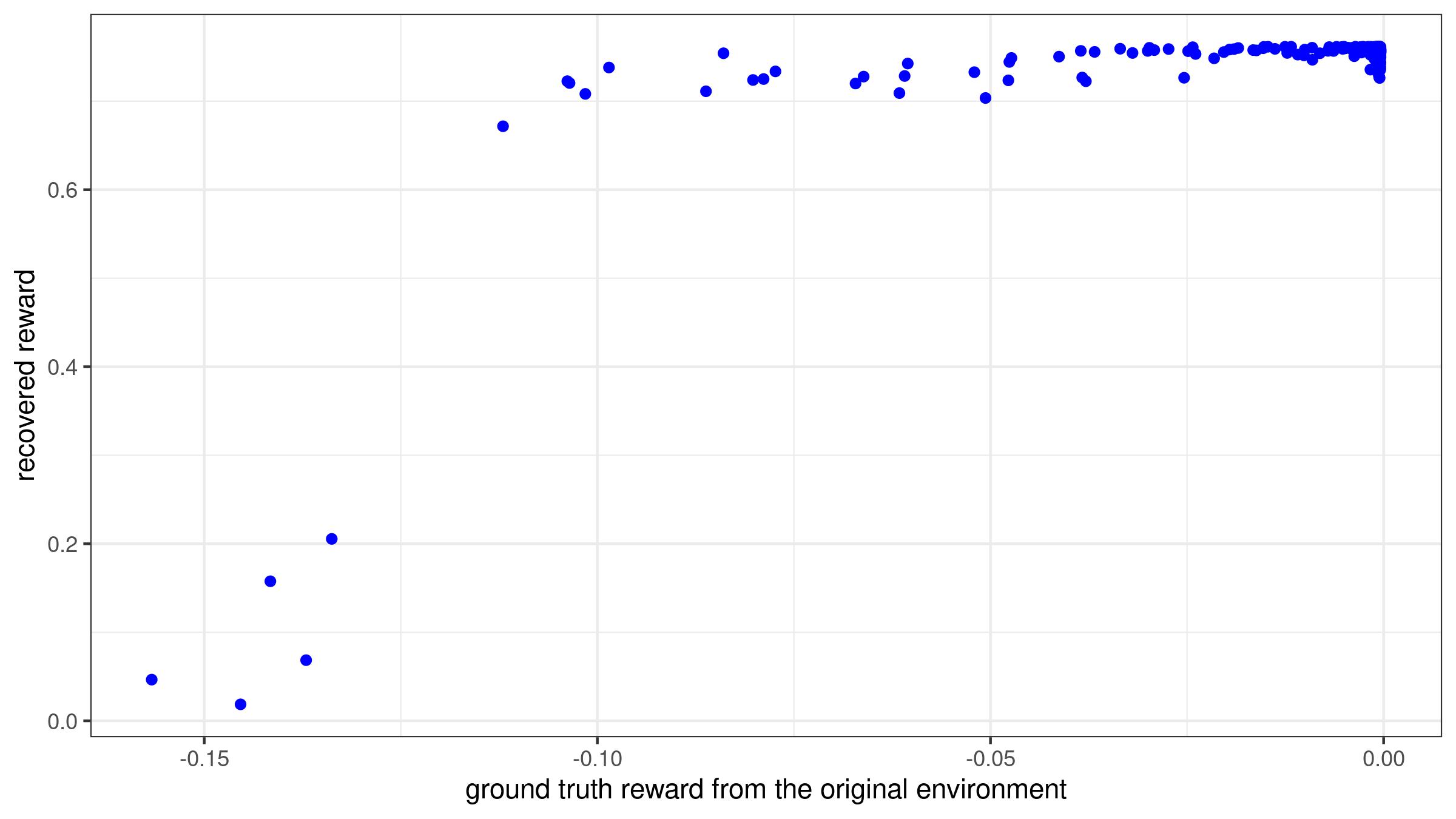}}
  }
  \subfigure[\scriptsize{Rewards in the 16th epoch}]{
    \resizebox{0.2\textwidth}{!}{\includesvg{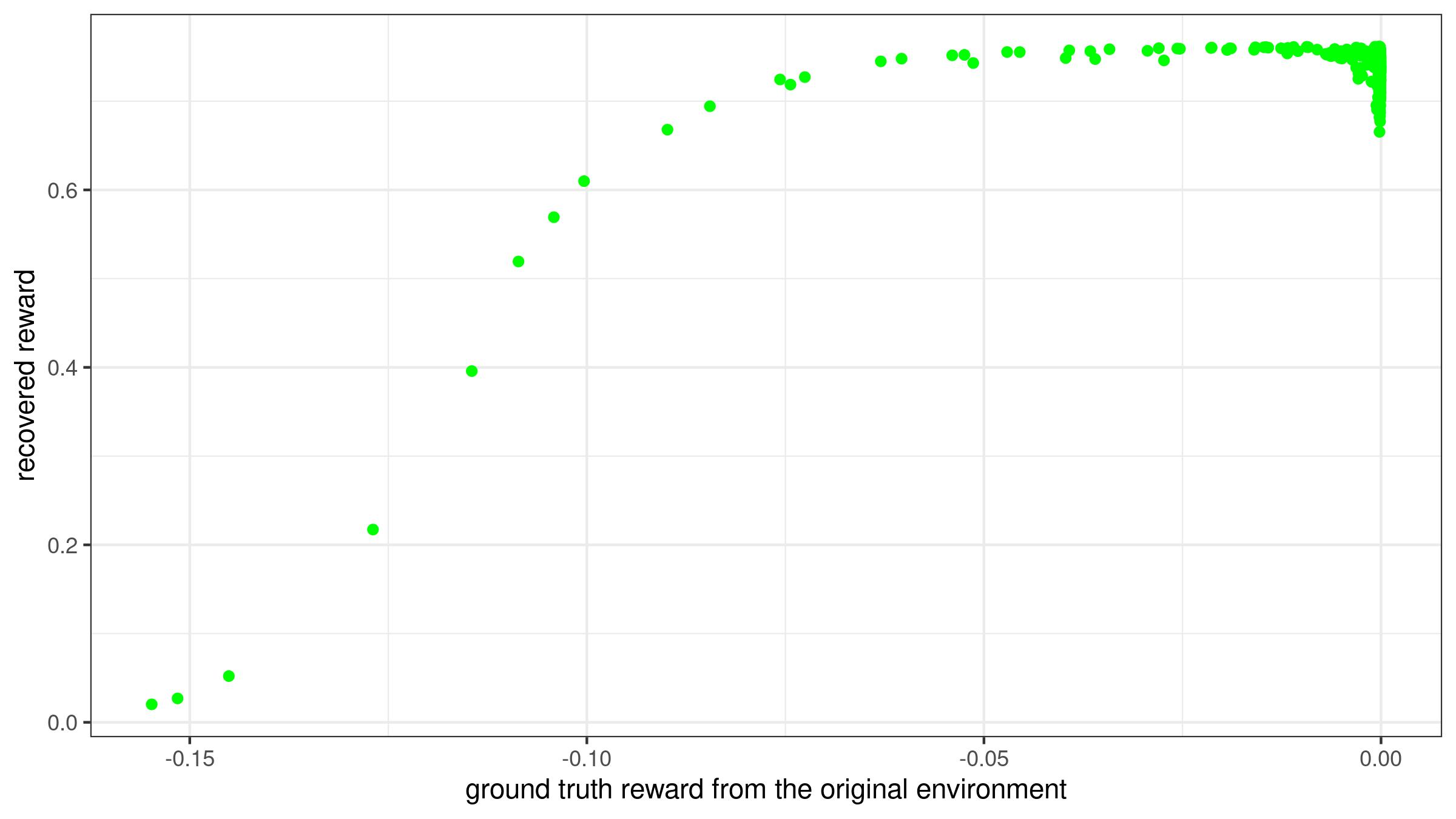}}
  }
  \caption{The correlation coefficients between the two rewards are 0.19 in the 13th learning epoch, 
  0.37 in the 14th learning epoch, 0.70 in the 15th learning epoch and 0.76 in the 16th learning epoch.}
  \label{epo2}
\end{figure}

\begin{figure}[tbh]
  \centering
  \subfigure[\small{CartPole-v1}]{
    \resizebox{0.225\textwidth}{!}{\includesvg{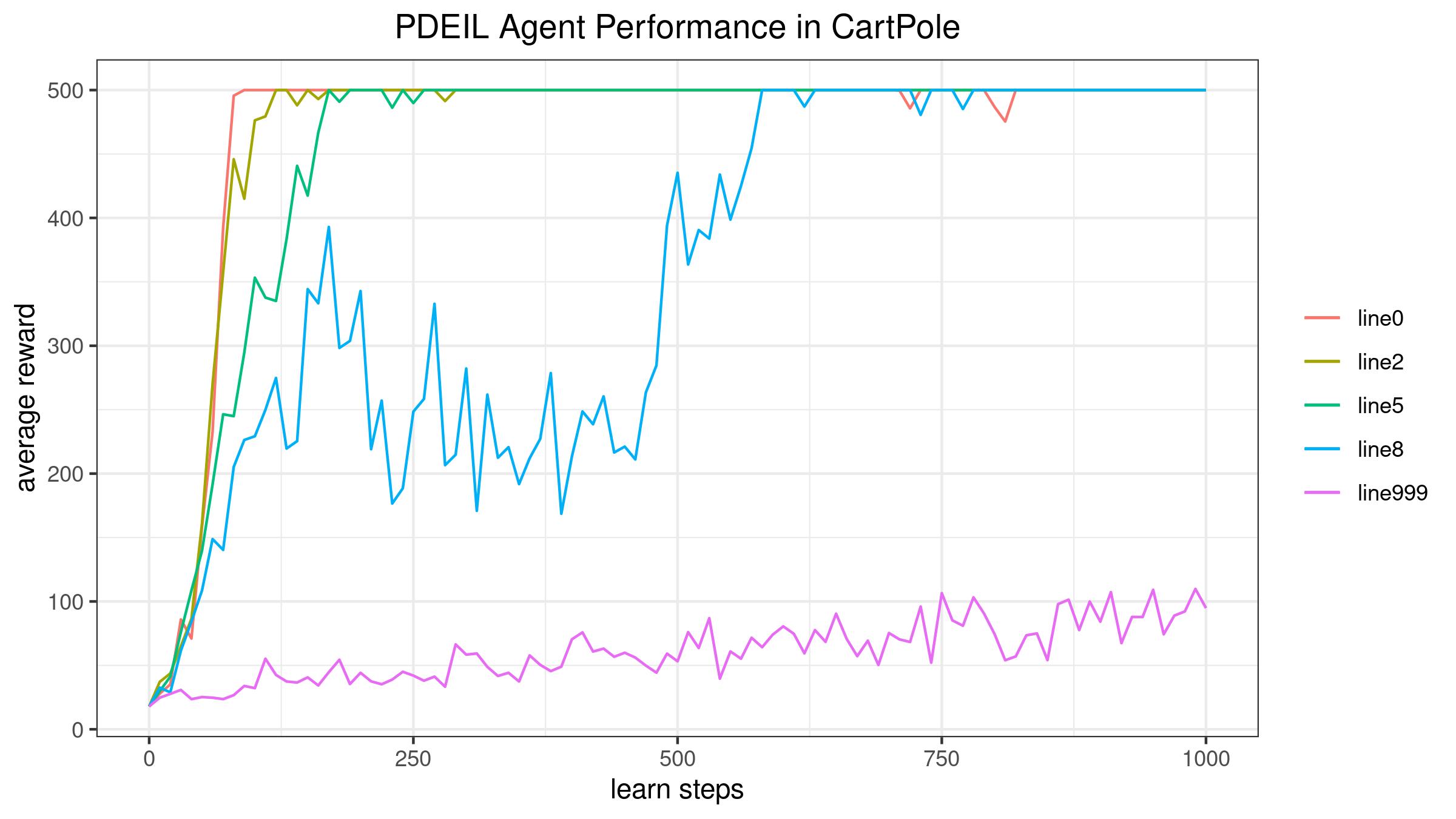}}
  }
  \subfigure[\small{Pendulum-v0}]{
    \resizebox{0.225\textwidth}{!}{\includesvg{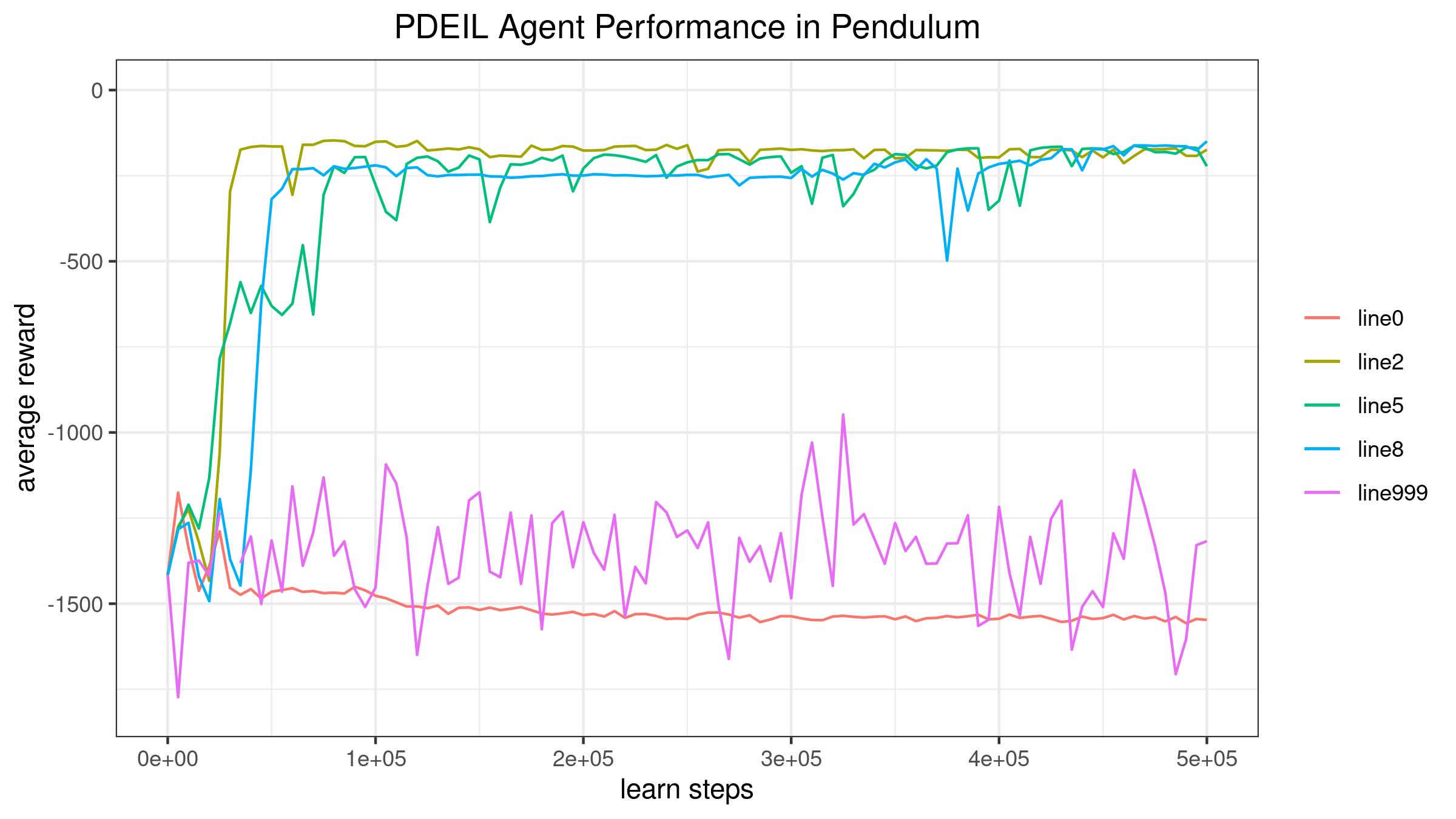}}
    \label{ee}
  }
  \caption{The average performance of PDEIL with various $\alpha$ values.}
  \label{sessad}
\end{figure}

To answer Question 4, in addition to the theoretical analysis in Section \ref{mr}, we conducted further empirical study as shown in Figure \ref{sessad}.
The introduction of the trade-off parameter $\alpha$ is to control the variance of estimated reward.
When $\alpha = 0$, we use the original reward function in Equation \ref{mis_lead}, and the misleading reward problem may occur. For example,
the agent using the original reward function  in the Pendulum environment(red line in Figure \ref{ee}) had a poor performance, 
which implied that the misleading reward problem occurred. 

\section{Conclusion and Discussion}

In this work, we proposed a brand-new reward function in the scenario of IRL, which has a concise and indicative form. 
We also proposed an algorithm called PDEIL based on this reward function, featuring a "watch-try-learn" style. 
In PDEIL, to recover the reward function, the agent watches the expert demonstrations 
and performs interactions with the environment, and uses RL algorithms to update the policy.

It is expected that our work may reveal a new perspective for IRL 
by transforming the original IRL problem into a density estimation problem. 
We can prove that, with a perfect probability density estimator, the corresponding optimal policy is identical to the expert policy 
as long as it is deterministic. 
However, constructing a good probability density estimator can be challenging in some cases, 
for example, when the state is of high dimensionality (e.g., an image). 
Consequently, further enhancing the efficacy of PDEIL with 
more competent probability density estimators will be a key direction for our future work.

\bibliographystyle{named}
\bibliography{ijcai21}

\end{document}